\documentclass[journal]{IEEEtran}
\usepackage{amsmath,amsfonts}
\usepackage{algorithm}
\usepackage[noend]{algpseudocode}
\usepackage{hyperref}
\usepackage[capitalize]{cleveref}
\usepackage[numbers]{natbib}
\usepackage{booktabs}
\usepackage{graphicx}
\usepackage{epstopdf}
\usepackage{amsthm}
\usepackage{makecell}
\usepackage[font=footnotesize]{subcaption}
\usepackage{xstring}
\usepackage{stfloats}
\newcommand{\subfigref}[1]{
    \StrBefore{#1}{-}[\mainfig]
    Fig.~\ref{\mainfig}(\subref{#1})
}
\makeatletter
\let\MYcaption\@makecaption
\makeatother
\makeatletter
\let\@makecaption\MYcaption
\makeatother
\allowdisplaybreaks
\usepackage{breqn}
\usepackage{longtable}

\newtheorem{theorem}{Theorem}

\ifCLASSINFOpdf
\else
\fi
\begin{document}
    \title{Enhancing Generalization in Evolutionary Feature Construction for Symbolic Regression through Vicinal Jensen Gap Minimization}
    \author{
        Hengzhe Zhang,
        Qi Chen,~\IEEEmembership{Member,~IEEE,}
        Bing Xue,~\IEEEmembership{Fellow,~IEEE,}
        Wolfgang Banzhaf,~\IEEEmembership{Member,~IEEE,}
        Mengjie Zhang,~\IEEEmembership{Fellow,~IEEE}
        \thanks{
            H. Zhang, Q. Chen, B. Xue and M. Zhang are with the Centre for Data Science and Artificial Intelligence \& School of Engineering and Computer Science, Victoria University of Wellington, PO Box 600, Wellington 6140, New Zealand (e-mails: hengzhe.zhang@ecs.vuw.ac.nz; qi.chen@ecs.vuw.ac.nz; bing.xue@ecs.vuw.ac.nz; and mengjie.zhang@ecs.vuw.ac.nz).\\
            \indent W. Banzhaf is with the Department of Computer Science and Engineering, Michigan State University, East Lansing, MI 48824, USA (e-mails: banzhafw@msu.edu).
        }
        \thanks{This work was supported in part by the Marsden Fund of New Zealand Government under Contract VUW1913, and Contract VUW2016; in part by the Science for Technological Innovation Challenge (SfTI) Fund under Grant E3603/2903; in part by the MBIE Data Science SSIF Fund under Contract RTVU1914; and in part by the MBIE Endeavor Research Programme under Contract C11X2001 and Contract UOCX2104.}
    }
    \markboth{Journal of \LaTeX\ Class Files,~Vol.~14, No.~8, August~2015}
    {Shell \MakeLowercase{\textit{et al.}}: Bare Demo of IEEEtran.cls for IEEE Journals}
    \maketitle
        \begin{abstract}
            Genetic programming-based feature construction has achieved significant success in recent years as an automated machine learning technique to enhance learning performance. However, overfitting remains a challenge that limits its broader applicability. To improve generalization, we prove that vicinal risk, estimated through noise perturbation or mixup-based data augmentation, is bounded by the sum of empirical risk and a regularization term—either finite difference or the vicinal Jensen gap. Leveraging this decomposition, we propose an evolutionary feature construction framework that jointly optimizes empirical risk and the vicinal Jensen gap to control overfitting. Since datasets may vary in noise levels, we develop a noise estimation strategy to dynamically adjust regularization strength. Furthermore, to mitigate manifold intrusion—where data augmentation may generate unrealistic samples that fall outside the data manifold—we propose a manifold intrusion detection mechanism. Experimental results on 58 datasets demonstrate the effectiveness of Jensen gap minimization compared to other complexity measures. Comparisons with 15 machine learning algorithms further indicate that genetic programming with the proposed overfitting control strategy achieves superior performance.
        \end{abstract}
        \begin{IEEEkeywords}
            Genetic programming, symbolic regression, evolutionary feature construction, overfitting, vicinal risk minimization
        \end{IEEEkeywords}
        \IEEEpeerreviewmaketitle
        \section{Introduction}
        \label{sec: Introduction}
        Automated feature construction is a popular technique in the domain of automated machine learning (AutoML)~\cite{EF-TEVC}. For a given dataset $(X,Y)$, automated feature construction aims to enhance the learning performance of a machine learning algorithm $\mathcal{A}$ by constructing a set of high-level features $\phi_1, \dots, \phi_m$  from the original (low-level) feature space $X$. Unlike deep learning, which relies on numerous parameters to construct features, GP-based feature construction focuses on creating symbolic models to construct features in function space to improve the performance of learning algorithms.
        Genetic Programming (GP), an evolutionary computation technique, provides key advantages such as interpretable representations, gradient-free search, and global optimization capabilities. These strengths make GP particularly effective for feature construction in non-differentiable symbolic spaces and for optimizing non-differentiable loss functions. However, a significant challenge with modern evolutionary feature construction methods, including GP, is overfitting. While these methods can achieve impressive performance on training sets, the features they construct might not generalize well to unseen data, mirroring the challenges faced by many automatic feature construction methods, such as deep neural networks and kernel methods.
        The issue of overfitting in GP has been widely studied. A well-recognized approach to controlling model complexity is by regulating model size~\cite{la2020learning}.
        However, growing evidence suggests that model size alone is insufficient to fully characterize model complexity~\cite{vanneschi2010measuring}. For example, using model size as a complexity measure cannot differentiate the complexity between $x_1 \times x_2$ and $\sin(\sin(x))$. Thus, incorporating semantic considerations is crucial.
        Building on this insight, several studies in GP address overfitting from the perspective of statistical machine learning theory~\cite{chen2018structural,chen2020rademacher}. Nonetheless, traditional metrics such as VC-dimension~\cite{chen2018structural} or Rademacher complexity~\cite{chen2020rademacher} often fall short in accurately estimating generalization performance when applied to deep-learning-based feature construction techniques. With GP being recognized as a data-efficient deep learning technique~\cite{bi2022genetic, wu2023neural}, it is essential to explore more effective methods for controlling overfitting.
        In the realm of deep learning, vicinal risk minimization (VRM)~\cite{chapelle2000vicinal} has recently garnered significant success. The key idea of VRM is to not only minimize the training loss on the original training samples but also on the vicinal samples surrounding each training instance, which are synthesized through data augmentation methods in practice~\cite{zhang2018mixup}. The intuition behind VRM is that by minimizing the empirical loss around the neighborhood $\mathcal{N}(x_i)$ of each training instance $x_i$, the model can maintain its behavior when test data slightly deviates from the training data. Formally, for a loss function $\mathcal{L}$, VRM optimizes the following objective function:
        \begin{equation}
            \mathcal{V}(f) = \frac{1}{n} \sum_{i=1}^n \int \mathcal{L}(f(x), y) \, dP_{x_i}(x),
            \label{eq: General VRM}
        \end{equation}
        where $x$ denotes a vicinal example that is a nearby datapoint of a training example $x_i$ on the data manifold, and $P_{x_i}(x)$ represents the probability distribution of vicinal examples $x$ in the neighborhood of $x_i$. Intuitively, vicinal examples $x$ that are further from the training data $x_i$ have lower probabilities $P_{x_i}(x)$.
        In deep learning, vicinal risk has been widely adopted as an optimization objective to mitigate overfitting. However, traditional VRM does not explicitly disentangle the empirical loss and the regularization term from vicinal risk, instead optimizing both jointly by minimizing the empirical loss on synthesized samples from the neighborhoods of the original training data~\cite{zhang2018mixup}. To adapt VRM for evolutionary feature construction, this paper proposes decomposing the vicinal risk $\mathcal{V}(f)$ into an empirical loss component and a regularization term to more effectively control overfitting, motivated by several key considerations:
        \begin{itemize}
            \item \textbf{Flexible Trade-off between Accuracy and Smoothness:} Decomposing vicinal risk into an empirical loss component, $\frac{1}{n} \sum_{i=1}^n \int \mathcal{L} \left(f(x_i), y_i\right) \, dP(x)$, and a regularization term allows for fine-grained control over the balance between training loss and model smoothness. This flexible balance enables a more nuanced trade-off between empirical loss and regularization.
            \item \textbf{Accurate Training Error:} In \cref{eq: General VRM}, the integral in $\mathcal{V}(f)$ is typically estimated using samples from the neighborhood $\mathcal{N}(x_i)$, which can lead to inaccuracies due to limited sampling. However, empirical loss can be computed accurately on the original training data. Decomposing vicinal risk helps mitigate these inaccuracies by ensuring that the empirical loss, i.e., $\frac{1}{n} \sum_{i=1}^n \mathcal{L} \left(f(x_i), y_i\right)$, is accurately calculated.
            \item \textbf{Optimizing Validation Loss:} Traditional vicinal loss entails empirical loss~\cite{zhang2018mixup}. However, if we intend to replace empirical loss with alternative loss functions, such as cross-validation loss—which is widely used in feature construction and AutoML—the decomposition of the vicinal loss becomes necessary.
        \end{itemize}
        To decompose vicinal risk, this paper proves that vicinal risk is bounded by the sum of two terms: empirical loss and either a finite difference or a vicinal Jensen gap. Specifically, if vicinal examples are synthesized via noise perturbation then a finite difference is the regularization term. If synthesized using the mixup method~\cite{zhang2018mixup}, the vicinal Jensen gap becomes the regularization term. Mixup is a data augmentation technique that linearly interpolates both the inputs and the labels of pairs of examples. After decomposition, the empirical loss term, represented by cross-validation loss, can be accurately calculated, while the finite difference or vicinal Jensen gap serves as a regularization objective to control overfitting. These two objectives can be optimized using a weighted-sum objective function. However, different datasets with varying noise levels may require different regularization strengths. To address this, we propose a noise estimation strategy to adaptively adjust the regularization strength based on the estimated noise level in the dataset. This adaptive approach highlights a key benefit of vicinal risk decomposition: it allows flexible control over model complexity without imposing excessive regularization that could hinder model fitting.
        In summary, the overall goal of this paper is to enhance generalization in evolutionary feature construction for symbolic regression by decomposing vicinal risk into empirical loss and regularization terms. This decomposition enables effective overfitting control by flexibly balancing model accuracy and smoothness. The objectives/contributions of this paper are summarized as follows:
        \begin{itemize}
            \item A regularized GP is proposed to optimize both cross-validation loss and either the vicinal Jensen gap or the finite difference to control the overfitting in evolutionary feature construction algorithms~\footnote{Source Code: \url{https://github.com/hengzhe-zhang/EvolutionaryForest/blob/master/experiment/methods/VJM_GP.py}}. Additionally, a noise estimation strategy is proposed to adjust the regularization strength adaptively.
            \item An upper bound on perturbation-based vicinal risk minimization is derived, demonstrating that perturbation-based vicinal risk minimization can be decomposed into empirical risk and finite difference.
            \item An upper bound on mixup-based vicinal risk minimization is established, showing that mixup-based vicinal risk minimization can be decomposed into empirical risk and the vicinal Jensen gap.
            \item To alleviate the discrepancy between synthesized and real data, a manifold intrusion detection strategy is proposed, thereby avoiding the issue of Jensen gap minimization overly penalizing effective models.
        \end{itemize}
        The remainder of this paper is organized as follows. \cref{sec: Related Work} reviews existing techniques for overfitting control and evolutionary feature construction. \cref{sec: Vicinal Risk Decomposition} decomposes the vicinal risk into empirical loss and regularization terms, laying the foundation for the proposed algorithm. \cref{sec: Algorithm} describes the proposed algorithm in detail. \cref{sec: Experimental Settings} presents the experimental settings, followed by the experimental results in \cref{sec: Experimental Results} and additional analysis in \cref{sec: Further Analysis}. Finally, \cref{sec: Conclusion} summarizes the paper and suggests potential directions for future research.
        \section{Related Work}
        \label{sec: Related Work}
        \subsection{Vicinal Risk Minimization}
        \subsubsection{Vicinal Risk Minimization in Deep Learning}
        Vicinal risk minimization is a well-established technique that has achieved significant success in deep learning, particularly in computer vision tasks. Vicinal examples can be synthesized under various assumptions, such as randomly masking out a patch of an image (CutOut)~\cite{sajjadi2016regularization}, blending two images (MixUp)~\cite{zhang2018mixup}, or combining an image with its augmented version (AugMix)~\cite{hendrycks2019augmix}. However, most research on vicinal risk minimization has focused on image classification problems, while its application to tabular data and regression tasks remains limited~\cite{yao2022c}.
        \subsubsection{Vicinal Risk Minimization in GP}
        In the context of GP, VRM has primarily been applied to classification tasks~\cite{ni2015training}, under the assumption that the target class $y_i$ remains stable within the vicinity of training examples. Accordingly, VRM has been used to improve the generalization of GP-based classifiers~\cite{ni2015training} and evolutionary decision tree classifiers~\cite{cao2015use}. However, this assumption—that target labels $y_i$ remain unchanged with small perturbations in the training data—may not hold for regression tasks, where target values are continuous. Therefore, further investigation is required to adapt VRM for regression datasets.
        \subsection{Overfitting Control for GP}
        The issue of generalization in GP has gained significant attention in recent years because the goal of GP is not only to achieve good fitness but also to obtain models that perform well on unseen data~\cite{agapitos2019survey}. Early work on controlling overfitting in GP focused on reducing the model size. However, recent research indicates that generalization performance is more complex than merely achieving a small model~\cite{vanneschi2010measuring}. Based on these findings, researchers in GP have explored tools from theoretical machine learning, such as Tikhonov regularization~\cite{ni2014tikhonov}, VC-dimension~\cite{chen2018structural}, and Rademacher complexity~\cite{chen2020rademacher}, to control overfitting. However, modern feature learning techniques, such as deep neural networks, possess high VC dimensions yet generalize well, making traditional complexity measures fail to accurately predict the generalization performance of modern machine learning methods~\cite{zhang2021understanding}. Another type of overfitting control technique in machine learning is ensemble-based learning, either through homogeneous ensembles~\cite{EF-TEVC} or heterogeneous ensembles~\cite{zhang2023sr}, based on bias-variance theory~\cite{owen2020characterizing}. However, ensemble methods are not suitable in scenarios where interpretability is essential. Finally, several practical machine learning techniques, such as random sampling~\cite{gonccalves2013balancing}, early stopping~\cite{tuite2011early}, semi-supervised learning~\cite{silva2018semi}, and soft targets~\cite{vanneschi2021soft}, have shown effectiveness in GP, although the theoretical foundation of these methods is limited.
        \subsection{Evolutionary Feature Construction}
        Based on the evaluation method, evolutionary feature construction can be categorized into filter-based, wrapper-based, and embedded methods. The filter-based method evaluates features using general metrics, such as information gain~\cite{ma2023multi} or impurity~\cite{neshatian2012filter}. This method is fast and can construct features that generalize well across different classifiers. Wrapper-based methods evaluate features based on a specific learning algorithm and can achieve superior performance with that algorithm. For wrapper-based feature construction, multi-tree GP methods have been widely used and show superior performance, such as Multidimensional Multiclass GP with Multidimensional Populations (M3GP)~\cite{munoz2019evolving}, Feature Engineering Automation Tool (FEAT)~\cite{la2020learning}, and Gene-pool Optimal Mixing Evolutionary Algorithm for GP (GP-GOMEA)~\cite{virgolin2020explaining}. The embedded method integrates feature construction directly into the learning process, with symbolic regression~\cite{wang2023shapley} being a typical example. This paper focuses on wrapper-based feature construction due to its superior performance with a given learning algorithm.
        \section{Vicinal Risk Decomposition}
        \label{sec: Vicinal Risk Decomposition}
        In this section, we decompose vicinal risk minimization into loss and regularization terms. Based on this decomposition, we propose an empirical method for calculating the regularization terms in \cref{sec: Empirical Vicinal Risk Decomposition}, which are used to control overfitting in GP.
        \begin{theorem}
            Let $f: \mathcal{X} \rightarrow \mathcal{Y}$ be a machine learning model mapping from the input space $\mathcal{X}$ to the output space $\mathcal{Y}$. Consider the VRM framework where each input $x_i \in \mathcal{X}$ is perturbed by a noise vector $\epsilon$, resulting in a vicinal training instance with input $x_{\text{vic}} = x_i + \epsilon$ and the corresponding target $y_{\text{vic}}=y_i$. Under these conditions, the following bound holds:
            \begin{equation}
                (y_{\text{vic}} - f(x_{\text{vic}}))^2 \leq 2(y_i - f(x_i))^2 + 2(f(x_i) - f(x_i + \epsilon))^2.
                \label{eq: Perturbation VRM}
            \end{equation}
            \label{the: Perturbation VRM}
        \end{theorem}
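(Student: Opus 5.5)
The plan is to add and subtract $f(x_i)$ inside the squared residual, then bound the square of a sum by the elementary inequality $(a+b)^2 \le 2a^2 + 2b^2$. No earlier result in the paper is needed; the whole argument is a pointwise algebraic identity followed by one scalar inequality. I will treat $\mathcal{Y} \subseteq \mathbb{R}$, since the squared loss is written as a scalar square. If $\mathcal{Y}$ is vector-valued, the same argument goes through with $\|\cdot\|^2$ and the parallelogram-type bound $\|a+b\|^2 \le 2\|a\|^2+2\|b\|^2$.

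First, substitute the definitions of the vicinal instance, $y_{\text{vic}} = y_i$ and $x_{\text{vic}} = x_i+\epsilon$. The left-hand side of \cref{eq: Perturbation VRM} then becomes $(y_i - f(x_i+\epsilon))^2$. Next, insert $0 = -f(x_i) + f(x_i)$ to obtain the decomposition
\begin{equation*}
y_i - f(x_i+\epsilon) = \underbrace{\bigl(y_i - f(x_i)\bigr)}_{a} + \underbrace{\bigl(f(x_i) - f(x_i+\epsilon)\bigr)}_{b}.
\end{equation*}
Here $a$ is the empirical residual at the original training point. The term $b$ is the finite difference of $f$ in the direction $\epsilon$, which is the regularization term the section is building toward.

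The key step is the inequality $(a+b)^2 \le 2a^2+2b^2$. I would justify it by observing that $2a^2+2b^2-(a+b)^2 = (a-b)^2 \ge 0$; equivalently, it is Cauchy--Schwarz or convexity of $t\mapsto t^2$ applied to $\tfrac12(2a)+\tfrac12(2b)$. Substituting $a$ and $b$ back gives exactly \cref{eq: Perturbation VRM}. The bound holds pointwise for every realization of $\epsilon$.

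There is no genuine obstacle; the only point deserving care is to emphasize the pointwise validity. Because the bound holds for every realization of $\epsilon$, one can afterwards integrate both sides against $P_{x_i}$ and average over $i$. This yields $\mathcal{V}(f) \le \frac{2}{n}\sum_i (y_i-f(x_i))^2 + \frac{2}{n}\sum_i \int (f(x_i)-f(x_i+\epsilon))^2\,dP(\epsilon)$, which is the form needed for the decomposition of \cref{eq: General VRM}. I would state this corollary explicitly after the proof.
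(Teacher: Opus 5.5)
Your proof is correct and matches the paper's argument: the paper also adds and subtracts $f(x_i)$, expands the square, and bounds the cross term by $2ab \le a^2+b^2$, which is exactly your $(a-b)^2 \ge 0$ step. Your remark that the bound holds pointwise in $\epsilon$, and can therefore be integrated against $P_{x_i}$ to bound $\mathcal{V}(f)$, is a harmless and useful addition that the paper does not spell out.
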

        The proof is provided in supplementary material. The first term of the bound in \cref{eq: Perturbation VRM} is the traditional empirical loss, $(y_i - f(x_i))^2$. The second term represents the stability of the model under perturbations, specifically $(f(x_i) - f(x_i + \epsilon))^2$, which corresponds to the square of a finite difference value. The interpretation of this finite difference depends on $\epsilon$. If $\epsilon$ is sufficiently small, the second term, $(f(x_i) - f(x_i + \epsilon))^2$, approximates the square of the derivative.
        In practice, for VRM based on finite difference minimization, the objective or fitness value of a GP individual $\Phi$ with a linear model $F$ can be formulated as follows:
        \begin{equation}
            \text { minimize }
            \left\{
                \begin{aligned}
                    O_1(\Phi) &= \sum_{x,y \in (X,Y)} \left[F(\Phi(x)) - y\right]^2,\\
                    O_2(\Phi) &= \sum_{x \in X} \left[F(\Phi(x+\epsilon)) - F(\Phi(x))\right]^2.
                \end{aligned}
            \right.
            \label{eq: gaussian-based VRM}
        \end{equation}
        The perturbation $\epsilon$ is usually generated by Gaussian noise, but the objective function shown in \cref{eq: gaussian-based VRM} is applicable to any perturbation noise.
        In fact, the objective function $\left[F(\Phi(x+\epsilon))-F(\Phi(x))\right]^2$, which measures the difference between predictions on original data $F(\Phi(x))$ and corrupted data $F(\Phi(x+\epsilon))$, has been used in evolutionary feature construction~\cite{zhang2024bias} and symbolic regression~\cite{bakurov2024sharpness} for variance reduction~\cite{zhang2024bias} and sharpness minimization~\cite{bakurov2024sharpness}, respectively. While perturbation-based VRM methods are straightforward to implement, they have limitations. Specifically, most regression models respond continuously to input changes, making the regularization term $\left[F(\Phi(x+\epsilon))-F(\Phi(x))\right]^2$ somewhat restrictive. Therefore, in the following, we decompose mixup-based VRM, deriving a regularization term more suitable for evolutionary feature construction in regression tasks.
        \begin{theorem}
            Within the framework of mixup-based VRM,  for any pair of samples $(x_i, y_i)$ and $(x_j, y_j)$, where $(x_j, y_j)$ is in the neighborhood of $(x_i, y_i)$, and for a mixup coefficient $\lambda \in [0,1]$, the loss on the vicinal example defined by $x_{\text{vic}} = \lambda x_i + (1-\lambda) x_j$ and $y_{\text{vic}} = \lambda y_i + (1-\lambda) y_j$ satisfies the following inequality:
            \begin{equation}
                \begin{split}
                    (y_{\text{vic}} - f(x_{\text{vic}}))^2 &\leq 3(\lambda (y_i - f(x_i)))^2 \\
                    &\quad + 3((1-\lambda) (y_j - f(x_j)))^2 \\
                    &\quad + 3\left(\lambda f(x_i) + (1-\lambda) f(x_j) - f(x_{\text{vic}})\right)^2.
                \end{split}
                \label{eq: Mixup VRM Theorem}
            \end{equation}
            \label{the: VJM}
        \end{theorem}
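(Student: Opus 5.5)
The plan is to write the residual on the vicinal example as a sum of three terms and then bound the square of that sum by three times the sum of the squares. No earlier result is needed beyond the elementary inequality behind Theorem~\ref{the: Perturbation VRM}, extended from two terms to three. First, I use the definitions $y_{\text{vic}} = \lambda y_i + (1-\lambda) y_j$ and $x_{\text{vic}} = \lambda x_i + (1-\lambda)x_j$. Adding and subtracting $\lambda f(x_i) + (1-\lambda) f(x_j)$ gives the exact identity
\begin{equation*}
y_{\text{vic}} - f(x_{\text{vic}}) = \lambda\bigl(y_i - f(x_i)\bigr) + (1-\lambda)\bigl(y_j - f(x_j)\bigr) + \bigl(\lambda f(x_i) + (1-\lambda) f(x_j) - f(x_{\text{vic}})\bigr).
\end{equation*}
I will call the three summands $a$, $b$ and $c$. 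This identity holds for every $\lambda\in[0,1]$ and every model $f$. It does not use the assumption that $(x_j,y_j)$ lies in the neighborhood of $(x_i,y_i)$; that assumption only matters for interpreting the third term as a local Jensen gap.

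Second, I apply the inequality $(a+b+c)^2 \le 3(a^2+b^2+c^2)$, valid for all real $a,b,c$. One proof is the Cauchy--Schwarz inequality applied to the vectors $(1,1,1)$ and $(a,b,c)$. Another is to expand and note that $3(a^2+b^2+c^2) - (a+b+c)^2 = (a-b)^2 + (b-c)^2 + (a-c)^2 \ge 0$. Either way, this is the three-term analogue of $(u+v)^2\le 2u^2+2v^2$, which underlies Theorem~\ref{the: Perturbation VRM}.

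Substituting $a$, $b$ and $c$ then gives the right-hand side of \cref{eq: Mixup VRM Theorem} term by term. The factors $\lambda$ and $1-\lambda$ stay inside the squares exactly as in the statement, so no further manipulation is needed.

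There is no real obstacle here. The only step requiring any care is choosing the decomposition: the intermediate quantity $\lambda f(x_i) + (1-\lambda) f(x_j)$ must be inserted so that the labels' convex combination pairs with the predictions' convex combination. The residual is then the vicinal Jensen gap of $f$ at $(x_i,x_j,\lambda)$. If a vector-valued output space $\mathcal{Y}$ were intended, the same argument goes through with squared norms in place of squares, using the triangle inequality and then Cauchy--Schwarz.
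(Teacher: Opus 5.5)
Your proposal is correct and is essentially the paper's proof: it adds and subtracts $\lambda f(x_i) + (1-\lambda) f(x_j)$ to split the residual into the same three terms, then applies $(a+b+c)^2 \le 3a^2+3b^2+3c^2$. Your justification of that inequality via $(a-b)^2+(b-c)^2+(a-c)^2 \ge 0$ supplies a step the paper only states.
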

        The proof is provided in supplementary material. \cref{the: VJM} shows that minimizing mixup-based VRM can be achieved by minimizing the vicinal Jensen gap around all training instances $x_i$ in the training data $X$. It is worth noting that the term ``vicinal'' here actually relates to how a sample $x_j$ is sampled from the neighborhood of sample $x_i$. If the neighborhood definition is infinitely large, the vicinal Jensen gap minimization will degenerate into minimizing the global Jensen gap of the function $f$.
        \cref{eq: Mixup VRM Theorem} consists of two parts. The first part is the mean squared loss $(\lambda (y_i - f(x_i)))^2 + ((1-\lambda) (y_j - f(x_j)))^2$, and the second part is $\left(\lambda f(x_i) + (1-\lambda) f(x_j) - f(x_{\text{vic}})\right)^2$. These two parts can form two objective functions, $O_1(\Phi)$ and $O_2(\Phi)$.
        Thus, for mixup-based VRM, the objective function is formulated as:
        \begin{equation}
            \text { minimize }
            \left\{
                \begin{aligned}
                    O_1(\Phi) &= \sum_{(x,y) \in (X,Y)} \left[F(\Phi(x)) - y\right]^2,\\
                    O_2(\Phi) &= \sum_{x_i, x_j \in X} \bigg[\lambda F(\Phi(x_i)) \\
                    &\quad + (1-\lambda) F(\Phi(x_j)) \\
                    &\quad - F( \Phi(\lambda x_i+(1-\lambda) x_j))\bigg]^2.
                \end{aligned}
            \right.
            \label{eq: mixup-based VRM}
        \end{equation}
        The traditional vicinal risk is $\mathcal{V}(f)=\frac{1}{n} \sum_{i=1}^n \int \mathcal{L} \left(f(x), y\right) d P_{x_i}(x)$. From the decomposition results, it is clear that the target $Y$ is eliminated from the regularization term $O_2(\Phi)$, making it a better regularization objective than the traditional vicinal risk $\mathcal{V}(f)$. This regularization term is only related to the smoothness of the evolved function itself, while the fitting accuracy is determined solely by the empirical loss $O_1(\Phi)$. It is worth noting that the regularization term is still data-dependent because $X$ remains involved, meaning the smoothness is not enforced across the entire feature space but only within regions where samples exist. This leads to a more realistic regularization for real-world applications, especially when the data lies on a low-dimensional manifold embedded in a high-dimensional space. Furthermore, smoothness is considered between nearby pairs of $x_i$ and $x_j$, rather than between all pairs of samples in the training set, further avoiding over-penalization. This approach better respects the local geometry of the data manifold rather than enforcing unnecessary global smoothness across irrelevant regions. Such a decomposition enables flexible control over the trade-off between the fitting accuracy and the smoothness of the evolved function.
        \begin{figure}[!tb]
            \centering
            \begin{subfigure}[b]{\columnwidth}
                \centering
                \includegraphics[width=\columnwidth]{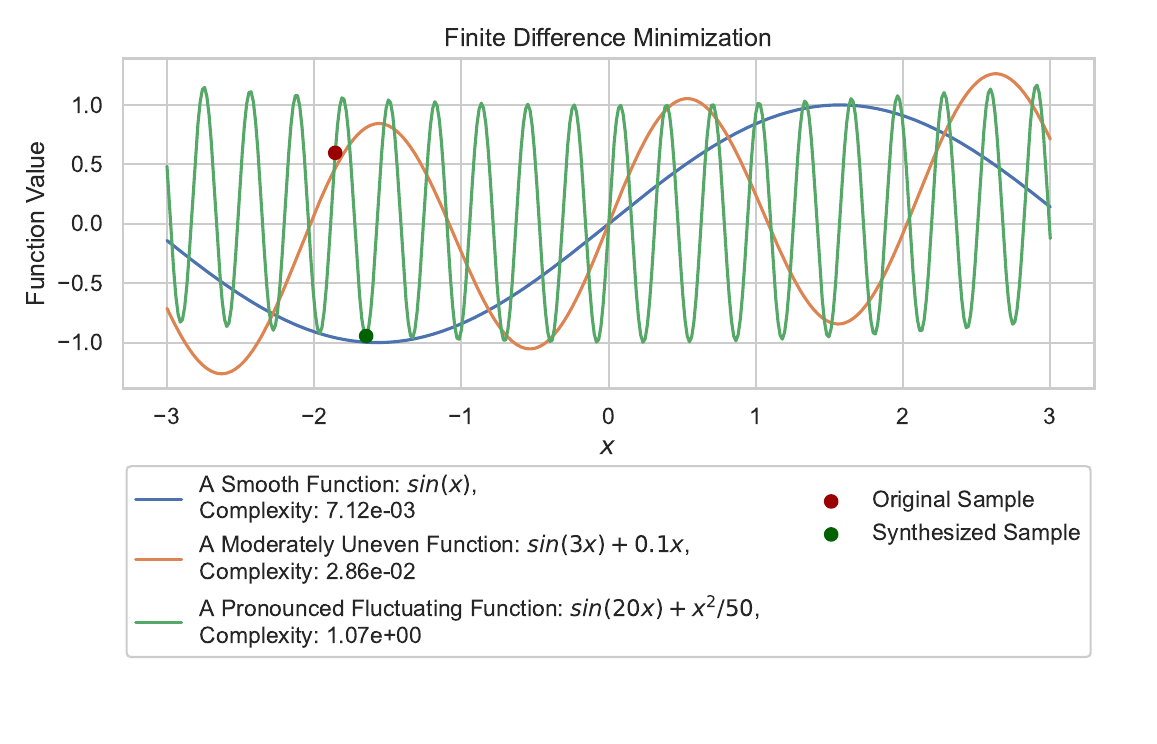}
                \vspace{-1cm}
                \caption{Gaussian noise-based VRM $\rightarrow$ Finite Difference}
                \label{fig: LC}
            \end{subfigure}
            \hfill
            \begin{subfigure}[b]{\columnwidth}
                \centering
                \includegraphics[width=\columnwidth]{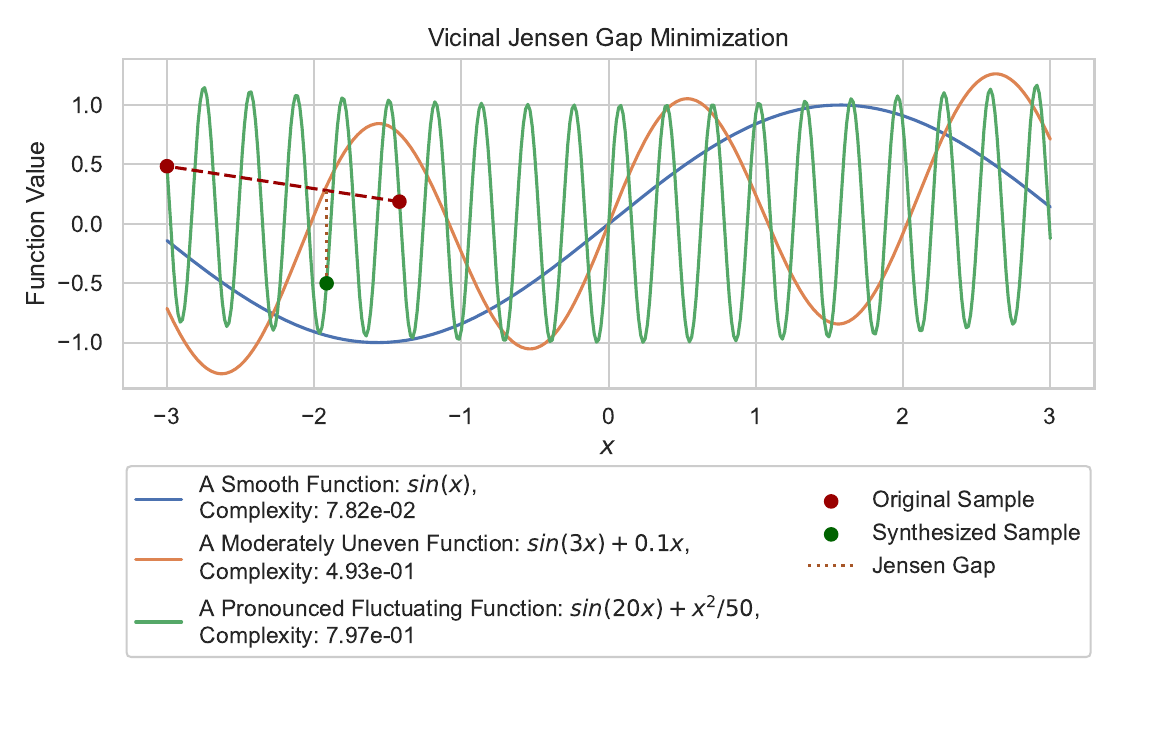}
                \vspace{-1cm}
                \caption{Mixup-based VRM $\rightarrow$ Vicinal Jensen Gap}
                \label{fig: LJ}
            \end{subfigure}
            \caption{Illustrative examples of Gaussian noise-based and mixup-based VRM. The predictions of the original and synthesized samples made by the fluctuating function are shown.}
            \vspace{-0.35cm}
        \end{figure}
        We illustrate the decomposition with intuitive examples for both Gaussian noise-based VRM and mixup-based VRM in \cref{fig: LC} and \cref{fig: LJ}, respectively. As depicted in \cref{fig: LC}, minimizing the loss on synthetic samples generated with Gaussian noise derives an overfitting control method that penalizes rapid changes in the function’s output in response to small input variations. Complexity in this case is measured by the finite difference $\left[F(\Phi(x+\epsilon)) - F(\Phi(x))\right]^2$, as defined in \cref{eq: gaussian-based VRM}. \cref{fig: LJ} illustrates that minimizing the loss on samples synthesized via the mixup method derives an overfitting control method that penalizes convexity and concavity, thereby promoting local linearity. In this case, complexity is measured by the vicinal Jensen gap $\bigg[\lambda F(\Phi(x_i)) + (1-\lambda) F(\Phi(x_j)) - F( \Phi(\lambda x_i+(1-\lambda) x_j))\bigg]^2$, as defined in \cref{eq: mixup-based VRM}.
        Here, $\lambda$ is sampled from a Beta distribution $\text{Beta}(\alpha, \alpha)$, which is widely used to sample the mixing coefficient in mixup methods~\cite{zhang2018mixup}.
        \section{The Proposed Algorithm}
        \label{sec: Algorithm}
        In this section, we first introduce the general framework for evolutionary feature construction that integrates the proposed vicinal Jensen gap or finite difference. Next, we propose an empirical method for calculating the regularization term, based on the decomposed regularization term described in \cref{sec: Vicinal Risk Decomposition}. Then, we propose a noise level estimation strategy to adaptively determine the regularization strength. Finally, we introduce a manifold intrusion detection mechanism to prevent the Jensen gap minimization from overly penalizing effective models.
        \subsection{Algorithm Framework}
        \label{sec: Algorithm Framework}
        In this paper, we propose a Vicinal Jensen Gap Minimization-based GP (VJM-GP) for feature construction. Alternatively, the regularization objective of the vicinal Jensen gap can be replaced with finite difference minimization, resulting in the Finite Difference Minimization-based GP (FDM-GP). Both variants share the same overall algorithmic framework. Our framework builds on the conventional structure of evolutionary feature construction while introducing novel fitness functions.
        As illustrated in \cref{fig: Workflow}, two key steps are integrated into the conventional evolutionary feature construction framework: first, synthesizing vicinal data at the beginning of the evolution process, and second, using the synthesized data during the evolution process to calculate the vicinal Jensen gap or finite difference.
        \begin{figure}[!tb]
            \centering
            \includegraphics[width=0.5\columnwidth, trim=15pt 15pt 15pt 15pt, clip]{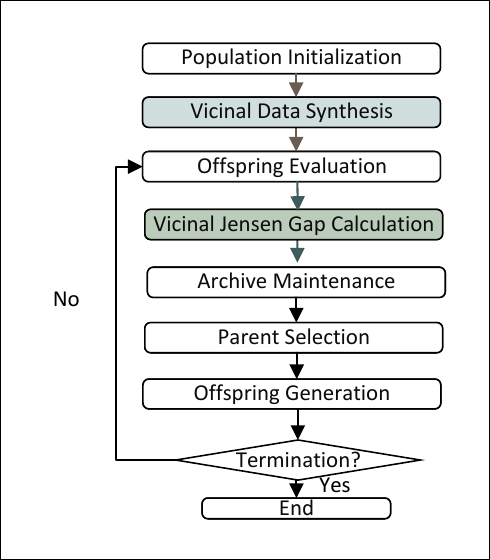}
            \caption{Workflow of Vicinal Jensen Gap Minimization-based GP.}
            \label{fig: Workflow}
        \end{figure}
        \begin{itemize}
            \item \textbf{Population Initialization:} In the initialization phase, a population $P$ is randomly generated by initializing a set of individuals. Each individual initially contains a single GP tree. However, during the evolutionary process, individuals can evolve to include multiple GP trees, each representing a constructed feature~\cite{munoz2019evolving}. The number of GP trees per individual is dynamically determined through tree addition and deletion operators applied during the mutation stage.
            \item \textbf{Parent Selection:} Parents are selected from the population using the lexicase selection operator~\cite{la2019probabilistic}. Lexicase selection operates by randomly generating filters to iteratively narrow down the population, thereby identifying promising parent individuals. For each filter corresponding to a randomly selected instance $k$, the minimum squared error $\min_{\Phi' \in P} \mathcal{L}_k(\Phi')$ is computed across all individuals $\Phi' \in P$, along with the median absolute deviation $\epsilon_{k}$ of GP individuals for that instance. Individuals that satisfy $\mathcal{L}_k(\Phi) \leq \min_{\Phi' \in P} \mathcal{L}_k(\Phi') + \epsilon_{k}$ are retained, while others are eliminated. This process is repeated with multiple filters until only one individual remains, which is then selected as a parent. To select multiple parents, the lexicase selection operator is invoked multiple times.
            \item \textbf{Offspring Evaluation:} Offspring are evaluated using cross-validation and vicinal risk estimation to obtain objective values $O_1(\Phi)$ and $O_2(\Phi)$, respectively. Typically, cross-validation is performed by partitioning the data into multiple folds and executing linear regression on each fold. However, for linear regression, leave-one-out cross-validation (LOOCV) can be computed efficiently~\cite{hastie2009elements}. Instead of solving $n$ separate least-squares optimization problems, the LOOCV error vector $\boldsymbol{e}$ can be calculated as \cref{eq: LOOCV}:
            \begin{equation}
                \boldsymbol{e} = \left( \frac{\mathbf{Y} - \mathbf{X} \boldsymbol{w}}{1 - \boldsymbol{h}} \right)^2,
                \label{eq: LOOCV}
            \end{equation}
            where $\boldsymbol{w}$ denotes the optimal weights, and $\boldsymbol{h}$ represents the leverage values. The optimal weights $\boldsymbol{w}$ for a regularization coefficient $\alpha$ are computed as:
            \begin{equation}
                \boldsymbol{w} = \left( \mathbf{X}^\top \mathbf{X} + \alpha \mathbf{I} \right)^{-1} \mathbf{X}^\top \mathbf{Y}.
                \label{eq: weight}
            \end{equation}
            The leverage values $\boldsymbol{h}$ are calculated as:
            \begin{equation}
                \boldsymbol{h} = \text{diag}\left(\mathbf{X} \left( \mathbf{X}^\top \mathbf{X} + \alpha \mathbf{I} \right)^{-1} \mathbf{X}^\top\right).
                \label{eq: leverage}
            \end{equation}
            Intuitively, the leverage $h_{ii}$ measures the influence of each data point $x_i$ on the model's weights, with larger values indicating greater impact. Using \cref{eq: LOOCV}, a vector of leave-one-out cross-validation errors is computed, which contains $n$ values representing the loss for each of the $n$ training instances. In addition to cross-validation loss, regularization terms such as finite difference or vicinal Jensen gap are calculated based on the method introduced in \cref{sec: Empirical Vicinal Risk Decomposition}.
            \item \textbf{Offspring Generation:} Offspring are generated from the selected parents using genetic operators including random subtree crossover, mutation, tree addition, and tree deletion~\cite{munoz2019evolving}. Random subtree crossover exchanges subtrees between two parent GP trees, facilitating the combination of building blocks from different individuals. Random subtree mutation introduces new genetic material by modifying parts of a GP tree. The tree addition and deletion operators modify the number of trees in each individual by either adding a GP tree using the ramped-half-and-half method or randomly removing a GP tree, dynamically adjusting the number of constructed features.
            \item \textbf{Environmental/Survival Selection:} The environmental selection process selects a population of $N$ individuals from the combined pool of parents and offspring using non-dominated sorting with crowding distance~\cite{deb2002fast}. This approach ranks individuals based on Pareto dominance and maintains diversity within the population by considering crowding distance. The top-$N$ individuals are chosen to survive to the next generation. This selection mechanism is essential because lexicase selection only considers the semantics of individuals, neglecting the regularization term. By incorporating non-dominated sorting, the environmental selection operator ensures that both objectives—cross-validation loss and regularization—are simultaneously optimized, promoting better generalization performance.
            \item \textbf{Archive Maintenance:} An external archive is maintained to store the historically best individuals for making predictions on unseen data. Throughout the optimization process, a Pareto set of two objectives is obtained. However, for making predictions on unseen data, a scalarized objective function is required to select a final model. Specifically, the best model is determined by the minimum sum of two objectives, i.e.,
            \begin{equation}
                O_1(x) + \tau O_2(x),
                \label{eq: Objective}
            \end{equation}
            where $O_1(x)$ is the cross-validation loss, $O_2(x)$ is the regularization term, i.e., the vicinal Jensen gap, and $\tau$ is the balancing coefficient. By default, $\tau$ is set to 1. However, for highly noisy datasets, it might be better to set $\tau$ to a larger value to reduce fitting noise, with the noise estimation strategy shown in \cref{sec: Noise Estimation Strategy}.
        \end{itemize}
        The processes of parent selection, offspring generation, offspring evaluation, environmental selection, and archive maintenance are iteratively repeated until the desired number of iterations is reached. Finally, the best model in the archive, as determined by \cref{eq: Objective}, is used to make predictions on unseen data.
        \subsection{Empirical Vicinal Risk Decomposition}
        \label{sec: Empirical Vicinal Risk Decomposition}
        At the beginning of empirical vicinal risk estimation, constructed features $\Phi(X)$ are first derived from the original features $X$. A linear regression model $F$ is then trained using these constructed features $\Phi(X)$ as input features. Once trained, the linear model $F$ remains fixed. The vicinal risk can then be estimated based on \cref{eq: gaussian-based VRM} or \cref{eq: mixup-based VRM}. To ensure robustness, vicinal risk is estimated over $K$ iterations, where $K$ is a hyperparameter. The process of empirical vicinal risk estimation includes four key steps, as outlined in \cref{alg: Vicinal Risk Estimation}.
        \begin{itemize}
            \item \textbf{Vicinal Data Synthesis (Line 4):} In this stage, vicinal data $X_{vic}$ is synthesized either by adding Gaussian noise or using the mixup method, depending on whether the VRM focuses on finite difference or the Jensen gap.
            \begin{itemize}
                \item \textbf{Finite Difference:} To optimize the finite difference, vicinal data is synthesized by adding Gaussian noise, as specified in \cref{eq: Gaussian VRM}:
                \begin{equation}
                    x_{vic}=x_i+\epsilon, \quad \epsilon \sim \mathcal{N}(0, \sigma^2).
                    \label{eq: Gaussian VRM}
                \end{equation}
                \item \textbf{Vicinal Jensen Gap:}
                To optimize the vicinal Jensen gap, new samples are synthesized using the mixup method, as defined in \cref{eq: Mixup VRM}:
                \begin{equation}
                    x_{vic}=\lambda x_i + (1-\lambda) x_j, \quad \lambda \sim \text{Beta}(\alpha, \beta),
                    \label{eq: Mixup VRM}
                \end{equation}
                where $\text{Beta}(\alpha, \beta)$ represents the Beta distribution with control parameters $\alpha$ and $\beta$. In this paper, both $\alpha$ and $\beta$ are set to 10, i.e., $\alpha = \beta = 10$, as setting them to the same value is common practice in mixup~\cite{zhang2018mixup}. A sensitivity analysis of $\alpha$ is provided in \cref{sec: Alpha} of the supplementary material, demonstrating that the method maintains stable performance across a range of $\alpha$ values.
                A key assumption of mixup is that vicinal samples are synthesized from nearby data points. Therefore, for each sample $x_i, y_i$, only samples $x_j, y_j$ that are similar to $x_i, y_i$ have a high probability of being mixed. To achieve this, we first define the distance kernel between two samples, as shown in \cref{eq: Kernel}:
                \begin{equation}
                    K(y_i, y_j) = e^{-\gamma \lVert y_i - y_j \rVert^2},
                    \label{eq: Kernel}
                \end{equation}
                where $\gamma$ is a spread parameter. Here, the distance is defined based on the distance between labels. Defining the distance based on either $y$ alone or both $x$ and $y$ has respective advantages, with the pros and cons discussed in \Cref{sec: Kernel Analysis} of the supplementary material.
                The kernel distance is normalized to compute the probability $P(y_j | y_i)$, as shown in \Cref{eq: Normalized Kernel}:
                \begin{equation}
                    P(y_j | y_i) = \frac{K(y_j, y_i)}{\sum_{y_k \in Y \setminus \{y_i\}} K(y_k, y_i)},
                    \label{eq: Normalized Kernel}
                \end{equation}
                where $y_k \in Y \setminus \{y_i\}$ ensures that the probability of sampling $y_i$ when the current instance is $y_i$ is explicitly set to zero to avoid redundant regularization.
                For each training sample $x_i$, $K$ paired samples are sampled according to the probability distribution $P(y_j | y_i)$.
            \end{itemize}
            For each random seed in $K$ iterations, the vicinal data varies. To ensure efficient utilization of computational resources and fair comparisons of vicinal risk, the synthesized vicinal data is cached and reused across different individuals.
            \item \textbf{Vicinal Feature Construction (Line 6):} Features are constructed for the synthesized vicinal data $X_{vic}$. These features are then input into the fixed model $F$ to obtain predictions $\hat{Y}$.
            \item \textbf{Vicinal Jensen Gap/Finite Difference Calculation (Lines 7-8):} The predicted results $\hat{Y}$ are compared against the target labels.
            \begin{itemize}
                \item \textbf{Gaussian Noise:} For samples synthesized with Gaussian noise, the target labels $y_{vic}$ remain the same as the original target $y_i$.
                \item \textbf{Mixup:}  For samples synthesized using mixup, the target label $y_{vic}$ is defined as in \cref{eq: Mixup Target}, based on \cref{eq: mixup-based VRM}:
                \begin{equation}
                    y_{vic} = \lambda \cdot F(\Phi(x_i)) + (1 - \lambda) \cdot F(\Phi(x_j)).
                    \label{eq: Mixup Target}
                \end{equation}
            \end{itemize}
            In both cases, we compute the worst-case squared error between the prediction $\hat{y}_i$ and the target label $y_{vic}$ for each instance over $K$ iterations. The worst-case squared error is used instead of the average squared error to encourage the development of a model that is more robust to unseen data across different scenarios.
            Specifically, for each $x_i$, the vicinal Jensen gap or finite difference is computed as:
            \begin{equation}
                \mathcal{V}_i = \max_{k \in \{1, \dots, K\}} (\hat{y}_i^k - y_{vic}^k)^2,
                \label{eq: Pessimistic Risk}
            \end{equation}
            where $\hat{y}_i^k$ and $y_{vic}^k$ represent the prediction and target label for the $k$-th iteration, respectively. The final regularization term $\mathcal{V}$ is the average of $\mathcal{V}_i$ over all training instances, as detailed in \cref{alg: Vicinal Risk Estimation}.
        \end{itemize}
        \begin{algorithm}[!tb]
            \caption{Vicinal Jensen Gap/Finite Difference Minimization }
            \label{alg: Vicinal Risk Estimation}
            \begin{algorithmic}[1]
                \Require GP Tree $\Phi$, Input $X$, Target Outputs $Y$, Linear Model $F$, Number of Iterations $K$
                \State Initialize Regularization Term: $\mathcal{V} \gets 0$
                \State $\Phi(X) \gets $ Feature Construction ($X, \Phi$)
                \For{$k = 1, \ldots, K$}
                    \State $X_{vic}, Y_{vic} \gets$ Data Synthesis ($X, Y, k$) \Comment{Cached}
                    \State $\Phi(X_{vic}) \gets $ Feature Construction ($X_{vic}, \Phi$)
                    \State $\hat{Y} \gets$  Prediction($F, \Phi(X_{vic})$)
                    \For{$i = 1, \ldots, N$}
                        \State $\mathcal{V}_i \gets \max(\mathcal{V}_i, (\hat{y}_i^k - y_{vic}^k)^2)$
                    \EndFor
                \EndFor
                \Ensure Regularization Term $\mathcal{V} = \frac{1}{N} \sum_{i=1}^{N} \mathcal{V}_i$
            \end{algorithmic}
        \end{algorithm}
        \subsection{Noise Estimation Strategy}
        \label{sec: Noise Estimation Strategy}
        \begin{figure}[!tb]
            \centering
            \includegraphics[width=\linewidth]{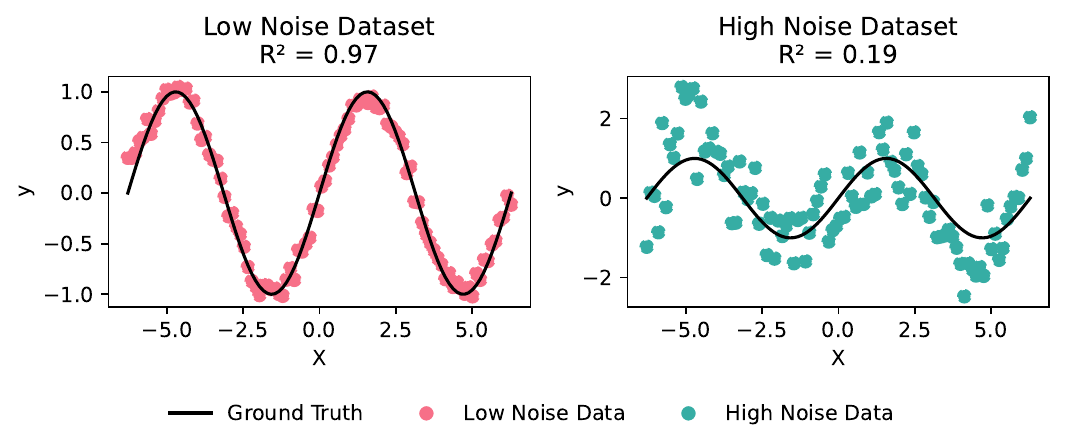}
            \caption{Estimating Noise Level with Extremely Randomized Trees.}
            \label{fig: noise estimation}
        \end{figure}
        There are various strategies for adjusting the weight $\tau$ in \cref{eq: Objective} based on estimated noise level or estimated complexity of datasets. In this paper, we propose adjusting $\tau$ according to the noise level in the dataset. This approach is supported by observations in \cref{fig: noise estimation}, which illustrate that the cross-validation predictions of the same sinusoidal function vary with different noise levels.
        Specifically, when the noise level is low, Extremely Randomized Trees (Extra Trees)~\cite{geurts2006extremely} can make accurate predictions. However, as the noise level increases, these trees tend to produce less accurate predictions.
        Consequently, we use the five-fold cross-validation $R^2$ score of training data with Extra Trees as a metric to estimate the noise level in the datasets. Based on the estimated noise level $R^2$, the weighting coefficient $\tau$ is defined as follows:
        \begin{equation}
            \tau =
            \begin{cases}
                1 & \text{if } R^2 \geq 0.5, \\
                10 & \text{if } R^2 < 0.5.
            \end{cases}
        \end{equation}
        \subsection{Manifold Intrusion Detection}
        \label{sec: Manifold Intrusion Detection}
        The vicinal Jensen gap minimization encourages local linearity between two selected points $x_i$ and $x_j$. However, Jensen gap minimization might conflict with the true underlying function if the relationship between $x_i$ and $x_j$ is highly nonlinear, leading to a phenomenon known as manifold intrusion~\cite{guo2019mixup}. An example of manifold intrusion is illustrated in \cref{fig: manifold_intrusion}, where the label of a synthesized sample deviates significantly from the true function due to incorrect assumptions about linearity.
        To mitigate manifold intrusion in synthesized vicinal data, we propose a manifold intrusion detection strategy to discard intrusion points. In this strategy, we predict the label for synthesized data points $x_{\text{vic}}$ using Extra Trees $T$ trained on the original training data $(X,Y)$, denoted as $y_{\text{tree}} = T(x_{\text{vic}})$, as indicated by the square in \cref{fig: manifold_intrusion}. Assuming $\Phi(x_i)< \Phi(x_j)$, the lower and upper bounds for the synthesized instance are defined as:
        \begin{equation}
            \text{Lower Bound} = (\lambda - \mu) \cdot F(\Phi(x_i)) + (1 - (\lambda - \mu)) \cdot F(\Phi(x_j)),
        \end{equation}
        \begin{equation}
            \text{Upper Bound} = (\lambda + \mu) \cdot F(\Phi(x_i)) + (1 - (\lambda + \mu)) \cdot F(\Phi(x_j)),
        \end{equation}
        where $\mu$ is a hyperparameter representing the margin. The value of $\mu$ depends on the confidence in the accuracy of the reference model, which in this paper is the Extra Trees regressor. If the reference model is expected to be accurate, $\mu$ is set to a small value. Conversely, if the reference model's reliability is questionable, $\mu$ is set to a large value. In this paper, we use the cross-validation $R^2$ scores on the training set, as calculated in \cref{sec: Noise Estimation Strategy}, to estimate the reliability of the reference model. If the model is deemed reliable, $\mu$ is set to 0.05, ensuring that the interval $\text{Upper Bound} - \text{Lower Bound}$ equals $0.1 \cdot \Big[F(\Phi(x_i)) - F(\Phi(x_j))\Big]$. Otherwise, $\mu$ is set to $\infty$, effectively disabling manifold intrusion detection. If $\Phi(x_i)>\Phi(x_j)$, the lower and upper bounds are swapped accordingly.
        Based on the lower and upper bounds, if the predictions from Extra Trees $T$ significantly deviate from the synthesized data values, it suggests that the region is difficult to synthesize accurately, and the sample should be discarded. Formally, data points are discarded if either of the following conditions holds:
        \begin{equation}
            y_{\text{tree}} < \text{Lower Bound} \quad \text{or} \quad y_{\text{tree}} > \text{Upper Bound}.
            \label{eq: Manifold Intrusion}
        \end{equation}
        These conditions indicate that minimizing the Jensen gap in that region is inconsistent with the true data distribution, suggesting the synthesized point should be excluded. Data points that meet these conditions are discarded, and new samples are generated. To prevent infinite regeneration, after every ten unsuccessful attempts, the $\alpha$ parameter in the Beta distribution in \Cref{eq: Mixup VRM} is multiplied by 10 while keeping $\beta$ unchanged, thereby increasing the weight of $x_i$ in the synthetic data. The maximum number of regeneration attempts is capped at 100. If this limit is reached, the data is left as is.
        \begin{figure}[!tb]
            \centering
            \includegraphics[width=\linewidth]{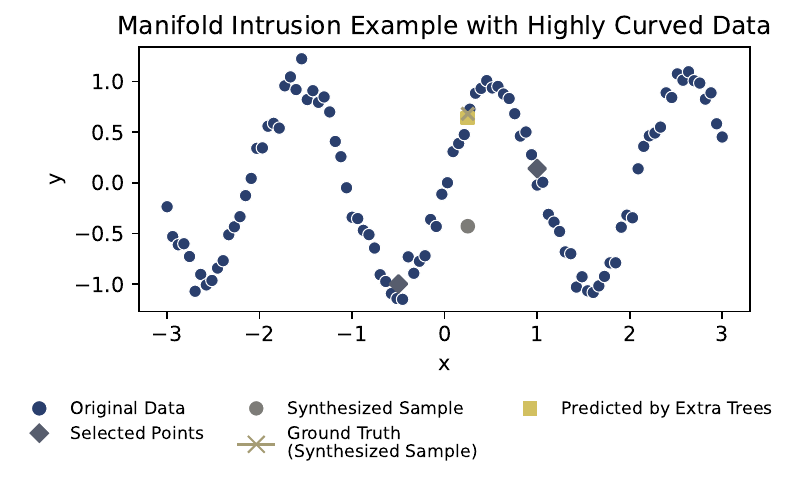}
            \caption{An example of manifold intrusion, where the ground truth and the prediction by Extra Trees overlap, but the synthesized sample deviates significantly from these two.}
            \label{fig: manifold_intrusion}
        \end{figure}
        \section{Experimental Settings}
        \label{sec: Experimental Settings}
        \subsection{Datasets}
        The regression datasets used in this paper are sourced from 120 black-box datasets in the Penn Machine Learning Benchmark (PMLB)~\cite{olson2017pmlb}. We focus exclusively on real-world datasets due to their inherent complexity. After excluding datasets synthesized by Friedman, a total of 58 real-world datasets are used in the experiments.
        \subsection{Parameter Settings}
        The parameter settings are detailed in \cref{tab: parameter settings}. Following tradition, a combination of a high crossover rate and a low mutation rate is used to facilitate the leveraging of building blocks. To prevent division-by-zero errors, the analytical quotient (AQ)~\cite{ni2012use} is used to replace the conventional division operator, which is defined as $AQ=\frac{a}{\sqrt{1 + b^2}}$. For the minimum and maximum operators, two arguments are allowed. The sine and cosine functions are defined as $\sin(\pi \times x)$ and $\cos(\pi \times x)$, respectively. Ephemeral constants are uniformly sampled from the range $[-1, 1]$. The archive elitism mechanism is used only in standard GP, as the non-dominated sorting~\cite{deb2002fast} in the proposed method inherently preserves elite individuals. For finite difference regularization, the standard deviation of the Gaussian noise, $\sigma$, is set to 0.5. For vicinal Jensen gap regularization, the spread parameter $\gamma$ is set to 0.5.
        \begin{table}[!tb]
            \centering
            \caption{Parameter settings for VJM-GP.}
            \begin{tabular}{cc}
                \toprule
                \textbf{Parameter}              & \textbf{Value}                 \\
                \midrule
                Maximal Population Size         & 200                            \\
                Number of Generations           & 100                            \\
                Crossover and Mutation Rates    & 0.9 and 0.1                    \\
                Tree Addition Rate              & 0.5                            \\
                Tree Deletion Rate              & 0.5                            \\
                Initial Tree Depth              & 0-3                            \\
                Maximum Tree Depth              & 10                             \\
                Initial Number of Trees         & 1                              \\
                Maximum Number of Trees         & 10                             \\
                Elitism (Number of Individuals) & 1                              \\
                Iterations of Estimation ($K$)  & 10                             \\
                Functions                       & \makecell{+, -, *, AQ, Square,\\ Log, Sqrt, Max, Min, \\ Sin, Cos, Abs, Negative}                       \\
                \bottomrule
            \end{tabular}
            \label{tab: parameter settings}
        \end{table}
        \subsection{Evaluation Protocol}
        \label{sec: Evaluation Protocol}
        To ensure a reliable comparison, each method performs 30 independent runs on each dataset. In each run, 100 samples are randomly chosen as training data~\cite{nicolau2021choosing}, and the remaining samples are used for testing. For smaller datasets where this split results in fewer than 100 test samples, a 50\%-50\% split is applied to ensure a sufficient number of test samples. A target encoder~\cite{micci2001preprocessing} is used to transform categorical input variables into numerical variables, as mixup requires numerical data. For the evaluation metric, $R^2$ is employed to eliminate magnitude differences between different datasets. During prediction, the maximum and minimum label values in the training data are recorded, and the predictions are clipped to these values to prevent excessively deviating predictions. The $R^2$ score is defined as $R^2 = 1 - \frac{\sum_{i} (y_i - \hat{y}_i)^2}{\sum_{i} (y_i - \bar{y})^2}$, where $\hat{y}_i$ represents the predictions, $y_i$ denotes the ground truth, and $\bar{y}$ is the mean of the ground truth values.
        After obtaining test scores, a Wilcoxon signed-rank test with a significance level of 0.01 is used to determine whether one method is statistically better, similar, or worse than another method.
        \subsection{Baseline Algorithms}
        For the baseline algorithms, we compare the proposed VJM-GP with 8 baseline methods, including standard GP and seven GP methods with various complexity measures:
        \begin{itemize}
            \item Pessimistic Vicinal Risk Minimization (P-VRM)~\cite{zhang2024p}: P-VRM implicitly optimizes the vicinal Jensen gap as shown in \cref{the: VJM}. The objective functions of P-VRM are:
            \begin{equation}
                \text { minimize }
                \left\{
                    \begin{aligned}
                        O_1(\Phi)&=\sum_{x \in X} \left[F(\Phi(x))- Y\right]^2,\\
                        O_2(\Phi)&=\sum_{x_i, x_j \in X} \bigg[\lambda y_i + (1-\lambda) y_j \\
                        &- F(\Phi(\lambda x_i+ (1-\lambda) x_j))\bigg]^2.
                    \end{aligned}
                \right.
                \label{eq: VRM}
            \end{equation}
            The process by which P-VRM generates vicinal samples follows the same procedure outlined in \cref{sec: Empirical Vicinal Risk Decomposition}. The primary distinction between P-VRM and VJM is that P-VRM implicitly incorporates the training loss in its second objective.
            \item Parsimony Pressure (PP)~\cite{de2023alleviating}: PP controls model size to enhance generalization performance. Specifically, it optimizes the number of nodes across all GP trees in an individual $\Phi$.
            \item Tikhonov Regularization (TK)~\cite{ni2014tikhonov}: TK regularizes overly large predictions using the regularization term $||f(X)||$, where $f$ represents the combination of GP trees and the linear model for making predictions. This approach penalizes excessively large predictions to ensure stability on unseen data.
            \item Grand Complexity (GC)~\cite{ni2014tikhonov}: GC integrates PP and TK to control overfitting, treating the dominance relationship between these objectives as a single optimization objective .
            \item Rademacher Complexity (RC)~\cite{chen2020rademacher}: RC measures the capacity of a model to fit the training data with random labels. It is formally defined as:
            \begin{equation}
                \operatorname{R}_n(\mathcal{L})=\mathbb{E}\left[\sup _{l \in \mathcal{L}} \frac{1}{n} \sum_{i=1}^n \sigma_i l\left(x_i,y_i\right)\right],
            \end{equation}
            where $\sigma_i$ is a Rademacher random variable taking values in $\{-1,1\}$ with equal probability.
            \item Weighted Maximal Information Coefficient between Residuals and Variables (WCRV)~\cite{chen2020improving}: WCRV consists of two terms. For important features $x^k$ in the GP tree that have a large correlation with the target compared to the median correlation $mv$ among all features, i.e., $\operatorname{MIC}_{x^k, Y} \geq mv$, the first term $\operatorname{MIC}_{x^k, R}$ penalizes the correlation between high-importance features and the residuals, weighted by feature importance $\operatorname{MIC}_{x^k, Y}$, thus avoiding regularity in the residuals. For unimportant features in the GP tree, i.e., $\operatorname{MIC}_{x^k, Y}<mv$, the second term encourages the GP to select a minimal number of low-relevance features, $1-\operatorname{MIC}_{x^k, Y}$. Based on these criteria, WCRV is defined as follows:
            \begin{equation}
                \begin{split}
                    \operatorname{WCRV}(\Phi)=&\sum_{\operatorname{MIC}_{x^k, Y} \geq mv} \operatorname{MIC}_{x^k, Y}\times \operatorname{MIC}_{x^k, R} \\
                    &+\sum_{\operatorname{MIC}_{x^k, Y}<mv}\left(1-\operatorname{MIC}_{x^k, Y}\right).
                \end{split}
            \end{equation}
            Ideally, both terms should be low for a low-complexity model, and thus WCRV is minimized to encourage a low-complexity model.
            \item Correlation between Input and Output Distances (IODC)~\cite{vanneschi2021soft}: IODC encourages a linear relationship between input correlation $\mathrm{I}$ and output correlation $\mathrm{O}$, denoted by $\operatorname{Cov}(\mathrm{I}, \mathrm{O})$. Formally, IODC is defined as:
            \begin{equation}
                \operatorname{IODC}(\Phi)=\frac{\operatorname{Cov}(\mathrm{I}, \mathrm{O})}{\sigma_{\mathrm{I}} \sigma_{\mathrm{O}}},
            \end{equation}
            where $\sigma_{\mathrm{I}}$ and $\sigma_{\mathrm{O}}$ are the standard deviations of $\mathrm{I}$ and $\mathrm{O}$, respectively. The key idea here is that a low-complexity model should exhibit a high correlation between input and output distances. Therefore, IODC is maximized to promote low model complexity.
            \item Standard GP: Standard GP uses the cross-validation loss as the optimization objective.
        \end{itemize}
        To ensure a fair comparison, all baseline algorithms are implemented within the same GP framework as the proposed VJM-GP, introduced in \cref{sec: Algorithm Framework}. To balance training performance with model complexity, the minimum Manhattan distance (MMD)-based knee point selection method~\cite{chiu2016minimum} is used to select individuals from the Pareto front of solutions. Specifically, the individual with the minimal sum of the two objectives, $\mathcal{O}_1 + \mathcal{O}_2$, is selected as the final model. Here, $\mathcal{O}_1$ and $\mathcal{O}_2$ represent the normalized objective values, as complexity measures are on a different scale compared to mean square error for the baseline methods. For P-VRM, the final model is chosen based on the minimal vicinal risk among the models on the Pareto front, rather than using the knee point. This is because the second objective in P-VRM encompasses both empirical loss (i.e., the first objective) and the regularization term, as detailed in \cref{eq: Perturbation VRM}. Using the VRM objective alone is a standard practice in deep learning~\cite{zhang2018mixup}.
        \section{Experimental Results}
        \label{sec: Experimental Results}
        In this section, we first compare the vicinal Jensen gap (VJM) to finite difference (FDM) as regularizers. Given the superiority of VJM, we then focus on comparing VJM with seven alternative complexity measures in terms of test $R^2$ scores, training $R^2$ scores, model sizes, and training time within the same evolutionary feature construction framework. Finally, we compare VJM-GP with 15 symbolic regression and machine learning algorithms to demonstrate the effectiveness of evolutionary feature construction with VJM.
        \subsection{Vicinal Jensen Gap versus Finite Difference}
        \def\JSLCB{15}
        \def\JSLCW{5}
        \def\JSLCTB{7}
        In \cref{sec: Empirical Vicinal Risk Decomposition}, mixup-based VRM is decomposed into training loss and vicinal Jensen gap, while perturbation-based VRM is decomposed into training loss and finite difference. A comparison of $R^2$ scores for using the Jensen gap and finite difference on the training and test sets is presented in \cref{fig: Training R2 LC} and \cref{fig: Test R2 LC}, respectively. Experimental results demonstrate that regularizing GP with the Jensen gap generally yields superior performance compared to regularizing with finite difference. As illustrated in \cref{fig: Test R2 LC}, regularization with Jensen gap outperforms regularization with finite difference on \JSLCB{} datasets and performs worse on only \JSLCW{} datasets regarding test $R^2$ scores. This indicates that the Jensen gap is a more effective regularizer for controlling overfitting. Consequently, in the following sections, we focus on optimizing the vicinal Jensen gap.
        \begin{figure}[!tb]
            \begin{subfigure}{0.45\linewidth}
                \centering
                \includegraphics[width=\linewidth]{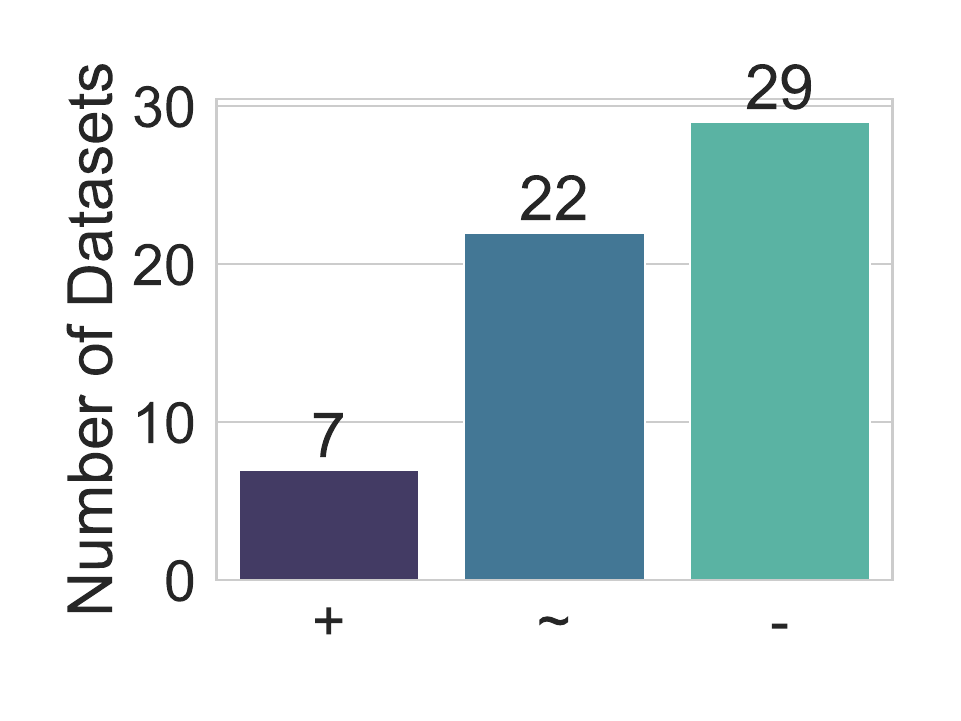}
                \caption{Training $R^2$ scores.}
                \label{fig: Training R2 LC}
            \end{subfigure}
            \begin{subfigure}{0.45\linewidth}
                \centering
                \includegraphics[width=\linewidth]{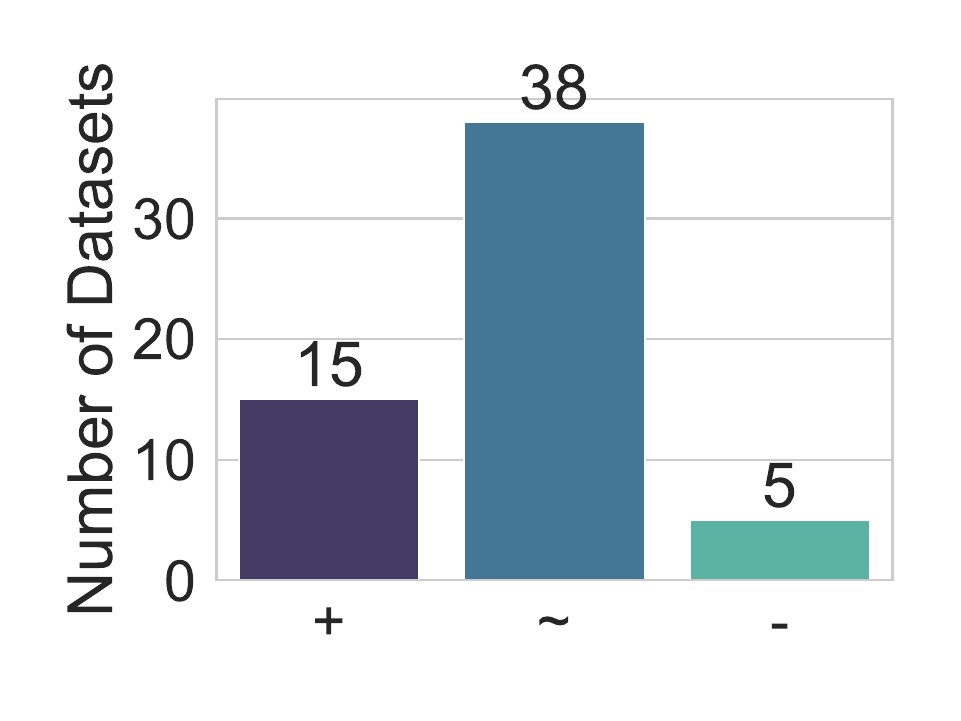}
                \caption{Test $R^2$ scores.}
                \label{fig: Test R2 LC}
            \end{subfigure}
            \caption{Comparison between optimizing vicinal Jensen gap versus finite difference. (``+",``$\sim$", and ``-" represent the number of datasets where optimizing vicinal Jensen gap performs better, similar, or worse, respectively, compared to optimizing finite difference.)}
        \end{figure}
        \subsection{Comparisons on Test $R^2$ Scores}
        \label{sec: Comparison}
        \begin{table*}[!tb]
            \centering
            \caption{Statistical comparison of \textbf{test $R^2$ scores} when optimizing various model complexity measures. (``+",``$\sim$", and ``-" indicate using the method in a row is statistically better than, similar to or worse than using the method in a column.)}
            \label{tab: Test R2}
            \resizebox{\textwidth}{!}{
                \begin{tabular}{ccccccccc}
                    \toprule
                    & \textbf{P-VRM}          & \textbf{PP}             & \textbf{RC}             & \textbf{GC}             & \textbf{IODC}            & \textbf{TK}              & \textbf{WCRV}& \textbf{Standard GP}\\
                    \midrule
                    \textbf{VJM}   & 22(+)/30($\sim$)/6({-}) & 29(+)/27($\sim$)/2({-}) & 51(+)/6($\sim$)/1({-})& 39(+)/18($\sim$)/1({-})& 39(+)/18($\sim$)/1({-})& 45(+)/13($\sim$)/0({-})& 43(+)/11($\sim$)/4({-})& 36(+)/17($\sim$)/5({-})\\
                    \textbf{P-VRM} & ---                     & 26(+)/26($\sim$)/6({-}) & 49(+)/7($\sim$)/2({-})  & 30(+)/27($\sim$)/1({-})& 33(+)/23($\sim$)/2({-})& 47(+)/10($\sim$)/1({-})& 39(+)/17($\sim$)/2({-})& 33(+)/18($\sim$)/7({-})\\
                    \textbf{PP}    & ---                     & ---                     & 36(+)/16($\sim$)/6({-}) & 19(+)/34($\sim$)/5({-}) & 22(+)/26($\sim$)/10({-})& 30(+)/27($\sim$)/1({-})& 24(+)/28($\sim$)/6({-})& 23(+)/26($\sim$)/9({-})\\
                    \textbf{RC}    & ---                     & ---                     & ---                     & 4(+)/16($\sim$)/38({-}) & 3(+)/31($\sim$)/24({-})  & 6(+)/25($\sim$)/27({-})& 8(+)/23($\sim$)/27({-})& 13(+)/14($\sim$)/31({-})\\
                    \textbf{GC}    & ---                     & ---                     & ---                     & ---                     & 17(+)/33($\sim$)/8({-})  & 23(+)/33($\sim$)/2({-})  & 15(+)/39($\sim$)/4({-})& 19(+)/23($\sim$)/16({-})\\
                    \textbf{IODC}  & ---                     & ---                     & ---                     & ---                     & ---                      & 20(+)/24($\sim$)/14({-}) & 15(+)/33($\sim$)/10({-})& 20(+)/15($\sim$)/23({-})\\
                    \textbf{TK}    & ---                     & ---                     & ---                     & ---                     & ---                      & ---                      & 9(+)/30($\sim$)/19({-})  & 9(+)/22($\sim$)/27({-})  \\
                    \textbf{WCRV}  & ---                     & ---                     & ---                     & ---                     & ---                      & ---                      & ---                      & 14(+)/25($\sim$)/19({-}) \\
                    \bottomrule
                \end{tabular}
            }
        \end{table*}
        \subsubsection{General Analysis}
        \def\VJMSTDB{36}
        \def\VJMSTDW{5}
        The comparison of test $R^2$ scores is presented in \cref{tab: Test R2}. In summary, different methods for controlling overfitting exhibit varying levels of effectiveness, with VJM emerging as the most effective among them. The results demonstrate that VJM significantly enhances generalization performance compared to standard GP on \VJMSTDB{} datasets, while it underperforms on only \VJMSTDW{} datasets, highlighting VJM's high efficacy in controlling overfitting. Other methods, aside from P-VRM, perform worse than VJM on more than half of the datasets and exceed VJM on only 0-5 datasets, further confirming the superiority of using VJM as a regularization metric. \cref{fig: Test Plot} provides evolutionary plots of test $R^2$ scores on four representative datasets, which clearly illustrate that VJM effectively controls overfitting. In contrast, methods like IODC suffer from overfitting, with test $R^2$ scores declining slightly in later generations, resulting in inferior performance compared to VJM.
        \begin{figure}[!tb]
            \centering
            \includegraphics[width=\linewidth]{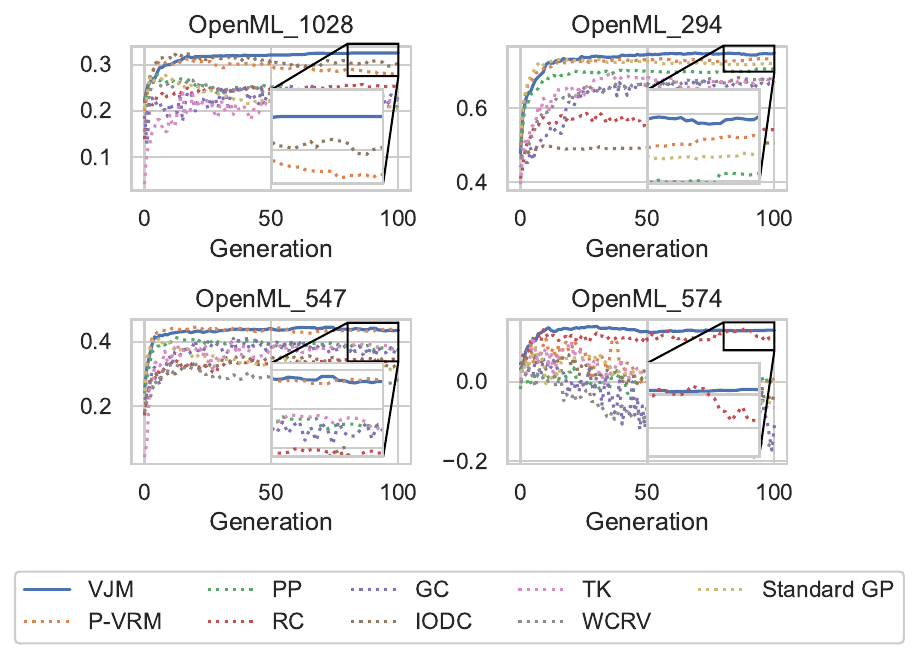}
            \caption{Evolutionary plots of the \textbf{test $R^2$ scores} for different complexity control methods.}
            \label{fig: Test Plot}
        \end{figure}
        \subsubsection{VJM vs VRM}
        \def\VJMPVRMB{22}
        \def\VJMPVRMW{6}
        In the deep learning domain, a common practice is to optimize VRM directly rather than decomposing it into training loss and vicinal Jensen gap and optimizing them simultaneously. However, as evidenced by \cref{tab: Test R2}, VJM outperforms P-VRM on \VJMPVRMB{} datasets and underperforms on only \VJMPVRMW{} dataset. The experimental results suggest that optimizing the vicinal Jensen gap and cross-validation loss on training data is more advantageous than directly optimizing mixup-based VRM. There are four reasons for this advantage. First, decomposing mixup-based VRM into training loss and vicinal Jensen gap allows for a flexible balance between these objectives through a noise estimation mechanism. \cref{sec: Noise Estimation} further illustrates the benefits of this approach. Second, VRM estimation can be imprecise in scenarios with limited vicinal risk sampling. By separating the empirical loss $(\mathbf{Y} - \mathbf{X} \boldsymbol{w})^2$ from VRM, we obtain a more accurate estimate of the empirical loss. Third, P-VRM, which jointly optimizes both training error and VRM, is similar to $\alpha$-dominance-based multi-objective optimization. Since VRM includes both training error and a regularization term, the empirical loss may be overemphasized, potentially obstructing the discovery of low-complexity solutions and negatively affecting generalization performance. Fourth, the loss value
        incorporated in vicinal risk within P-VRM is empirical loss rather than cross-validation loss, which may lead to an overly optimistic estimation of generalization performance and increase the risk of overfitting. Therefore, optimizing the vicinal Jensen gap and cross-validation loss on training data proves to be a more effective strategy than directly optimizing VRM.
        \subsubsection{VJM vs Parsimony Pressure}
        \def\VJMPPB{29}
        \def\VJMPPW{2}
        The results in \cref{tab: Test R2} indicate that controlling tree size remains an effective measure for managing overfitting in evolutionary feature construction. In evolutionary feature construction, small GP trees can have good fitting capabilities as long as they are complementary, which differs from single-tree-based symbolic regression methods~\cite{chen2020rademacher}. However, \cref{tab: Test R2} suggests that while PP is a competitive method for overfitting control, it is not as effective as VJM. VJM outperforms PP on \VJMPPB{} datasets and performs worse on only \VJMPPW{} datasets. These results suggest that relying solely on tree size as a measure of model complexity is insufficient, highlighting the importance of controlling functional complexity for effective overfitting control.
        \subsection{Comparisons on Training $R^2$ Scores}
        \begin{table*}[!tb]
            \centering
            \caption{Statistical comparison of \textbf{training $R^2$ scores} when optimizing various model complexity measures.}
            \label{tab: Training R2}
            \resizebox{\textwidth}{!}{
                \begin{tabular}{ccccccccc}
                    \toprule
                    & \textbf{P-VRM}          & \textbf{PP}              & \textbf{RC}            & \textbf{GC}              & \textbf{IODC}            & \textbf{TK}              & \textbf{WCRV}& \textbf{Standard GP}\\
                    \midrule
                    \textbf{VJM}   & 9(+)/17($\sim$)/32({-}) & 21(+)/7($\sim$)/30({-})  & 49(+)/6($\sim$)/3({-})& 28(+)/15($\sim$)/15({-})& 24(+)/14($\sim$)/20({-})& 30(+)/8($\sim$)/20({-})& 28(+)/16($\sim$)/14({-})& 0(+)/2($\sim$)/56({-})\\
                    \textbf{P-VRM} & ---                     & 19(+)/22($\sim$)/17({-}) & 54(+)/3($\sim$)/1({-}) & 39(+)/12($\sim$)/7({-})& 29(+)/19($\sim$)/10({-})& 29(+)/18($\sim$)/11({-})& 33(+)/16($\sim$)/9({-})& 1(+)/4($\sim$)/53({-})\\
                    \textbf{PP}    & ---                     & ---                      & 57(+)/1($\sim$)/0({-}) & 44(+)/13($\sim$)/1({-})  & 30(+)/26($\sim$)/2({-})& 30(+)/24($\sim$)/4({-})& 36(+)/19($\sim$)/3({-})& 0(+)/2($\sim$)/56({-})\\
                    \textbf{RC}    & ---                     & ---                      & ---                    & 0(+)/3($\sim$)/55({-})   & 3(+)/6($\sim$)/49({-})   & 0(+)/6($\sim$)/52({-})& 0(+)/13($\sim$)/45({-})& 0(+)/0($\sim$)/58({-})\\
                    \textbf{GC}    & ---                     & ---                      & ---                    & ---                      & 12(+)/31($\sim$)/15({-}) & 16(+)/24($\sim$)/18({-}) & 21(+)/26($\sim$)/11({-})& 0(+)/1($\sim$)/57({-})\\
                    \textbf{IODC}  & ---                     & ---                      & ---                    & ---                      & ---                      & 15(+)/23($\sim$)/20({-}) & 20(+)/24($\sim$)/14({-})& 0(+)/0($\sim$)/58({-})\\
                    \textbf{TK}    & ---                     & ---                      & ---                    & ---                      & ---                      & ---                      & 23(+)/21($\sim$)/14({-}) & 0(+)/4($\sim$)/54({-}) \\
                    \textbf{WCRV}  & ---                     & ---                      & ---                    & ---                      & ---                      & ---                      & ---                      & 0(+)/0($\sim$)/58({-}) \\
                    \bottomrule
                \end{tabular}
            }
        \end{table*}
        \def\VJMSTDTrainingWorse{56}
        \def\VJMPPTrainingWorse{30}
        \def\VJMGCTrainingWorse{15}
        The experimental results comparing training $R^2$ scores are shown in \cref{tab: Training R2}. These results reveal that standard GP achieves significantly higher training $R^2$ scores compared to VJM, with superior performance on \VJMSTDTrainingWorse{} datasets. \cref{fig: Training Plot} further confirms the superiority of standard GP in terms of training $R^2$ scores. However, as indicated in \cref{tab: Test R2}, standard GP suffers from severe overfitting despite its higher training $R^2$. This shows that solely optimizing training $R^2$ scores is insufficient for achieving good generalization in evolutionary feature construction methods. \cref{fig: Training Test Plot SAM} and \cref{fig: Training Test Plot LOOCV} illustrate the evolutionary plots of corresponding training and test $R^2$ scores. As shown in \cref{fig: Training Test Plot SAM}, training and test $R^2$ scores are highly correlated when using VJM. In contrast, \cref{fig: Training Test Plot LOOCV} demonstrates that improvements in training $R^2$ scores with standard GP do not consistently lead to better test $R^2$ scores and can even worsen performance, indicating overfitting in feature construction. Although controlling model complexity can improve generalization, it is also worth noting that complexity control should be moderate. Overly strict penalization can negatively impact generalization performance. For instance, methods like PP and GC exhibit lower training $R^2$ scores than VJM on \VJMPPTrainingWorse{} and \VJMGCTrainingWorse{} datasets, respectively. As shown in \cref{tab: Test R2}, these complexity control methods lead to worse generalization compared to VJM. Therefore, effective overfitting control requires a careful definition of model complexity, as overly stringent assumptions may over-penalize useful models and yield suboptimal results.
        \begin{figure}[!tb]
            \centering
            \includegraphics[width=\linewidth]{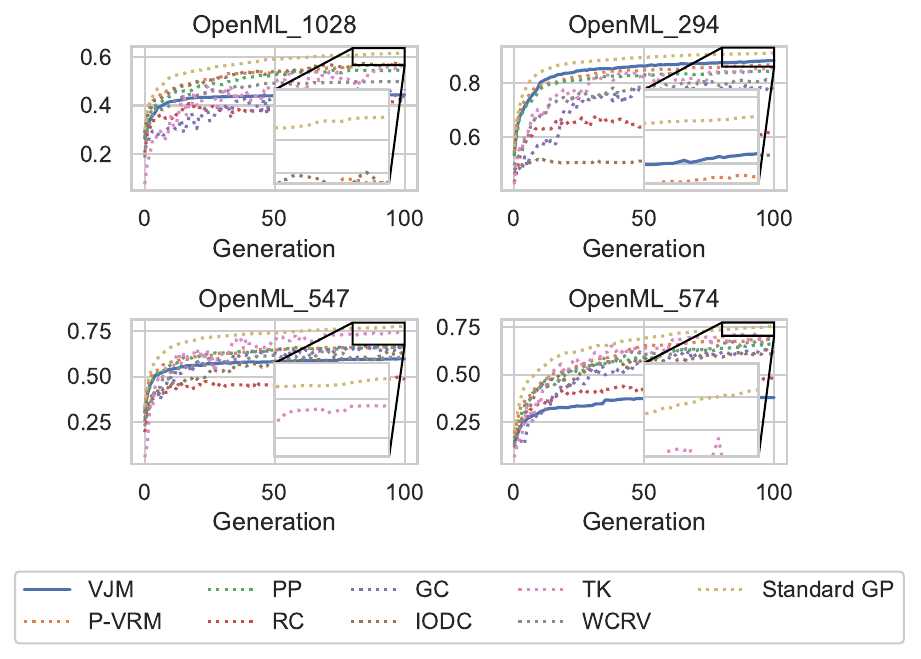}
            \caption{Evolutionary plots of the \textbf{training $R^2$ scores} for various complexity control methods.}
            \label{fig: Training Plot}
        \end{figure}
        \begin{figure}[!tb]
            \centering
            \includegraphics[width=\linewidth]{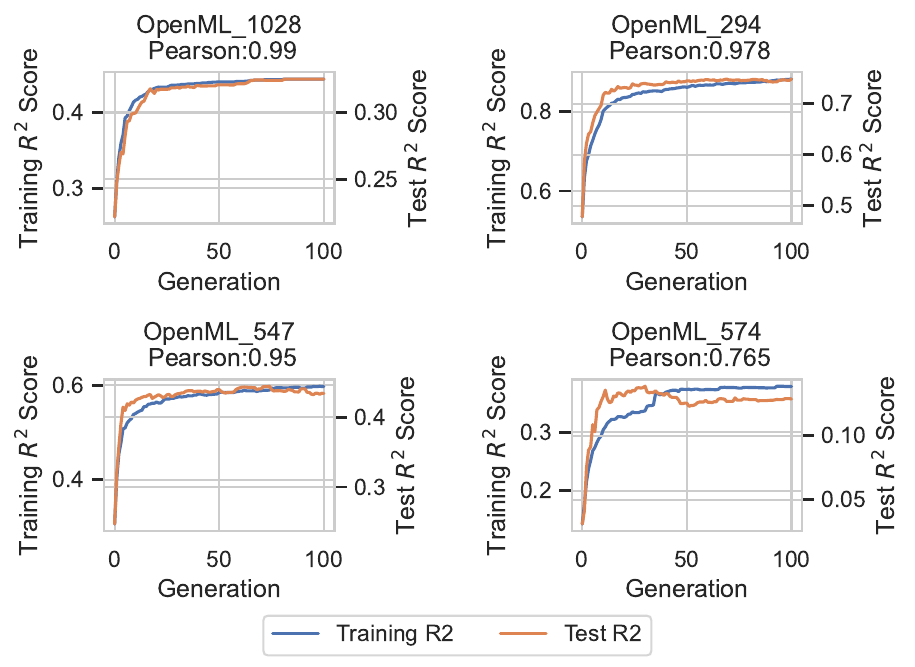}
            \caption{Evolutionary plots of the \textbf{training and test $R^2$ scores} for VJM-GP. Note that the plot includes two axes.}
            \label{fig: Training Test Plot SAM}
        \end{figure}
        \begin{figure}[!tb]
            \centering
            \includegraphics[width=\linewidth]{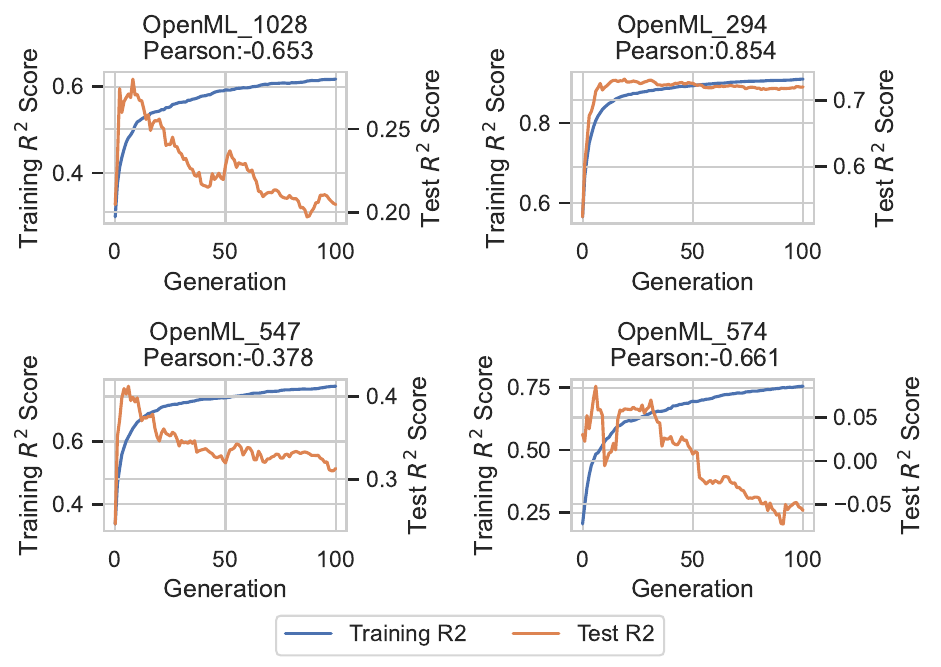}
            \caption{Evolutionary plots of the \textbf{training and test $R^2$ scores} for standard GP.}
            \label{fig: Training Test Plot LOOCV}
        \end{figure}
        \subsection{Comparisons on Tree Size}
        \begin{figure}[!tb]
            \centering
            \begin{subfigure}[t]{0.48\linewidth}
                \centering
                \includegraphics[width=\linewidth, trim=6pt 6pt 6pt 6pt, clip]{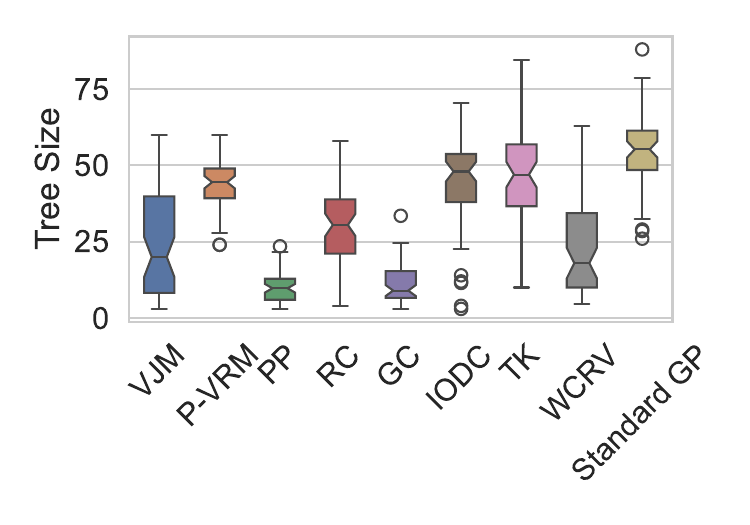}
                \vspace{-1cm}
                \caption{Tree Sizes}
                \label{fig: CT-Complexity}
            \end{subfigure}
            \hfill
            \begin{subfigure}[t]{0.48\linewidth}
                \centering
                \includegraphics[width=\linewidth, trim=6pt 6pt 6pt 6pt, clip]{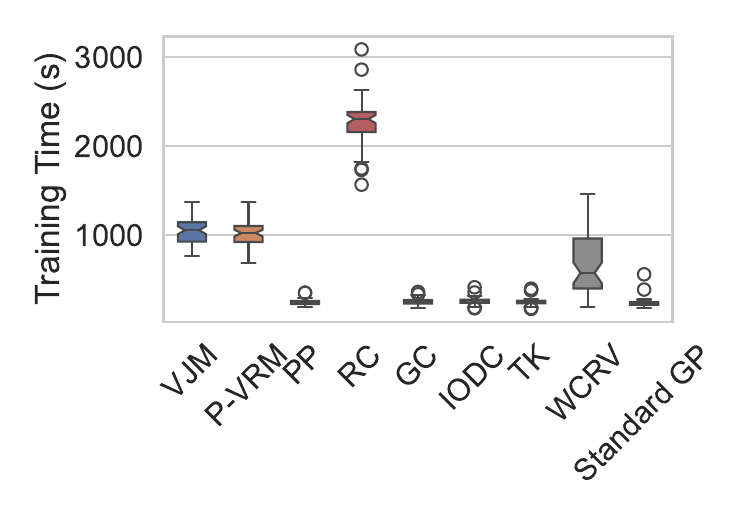}
                \vspace{-1cm}
                \caption{Training Time}
                \label{fig: CT-Training Time}
            \end{subfigure}
            \caption{Comparison of \textbf{tree sizes} and \textbf{training times} over 58 datasets when optimizing different model complexity measures.}
            \label{fig: CT}
        \end{figure}
        \def\STDSize{55.25}
        \def\VJMSize{20}
        A comparison of tree sizes is shown in \cref{fig: CT-Complexity}. The experimental results demonstrate that different complexity measures lead to models of varying sizes. VJM can reduce the model size compared to standard GP. Specifically, the median model size decreases from \STDSize{} to \VJMSize{}. This suggests that VJM implicitly favors simpler models, which could enhance interpretability. An interesting observation is that optimizing WCRV can result in similar tree sizes; however, the results in \cref{tab: Test R2} indicate that optimizing WCRV achieves much worse test performance compared to VJM. This suggests that improving generalization is more closely related to reducing functional complexity rather than merely minimizing model size.
        \subsection{Comparisons on Training Time}
        \label{sec: Training Time}
        \def\VJMTime{1041}
        \def\PPTime{250}
        Comparisons of training time for different complexity measures are shown in \cref{fig: CT-Training Time}. The results indicate that optimizing semantic complexity measures, such as VJM, requires more time than optimizing syntactic complexity measures, like PP. On average, VJM takes \VJMTime{} seconds, whereas PP takes only \PPTime{} seconds. This difference arises from the need to construct features based on vicinal data in VJM, which increases the computational cost.
        Theoretically, the time complexity of standard GP is $O(|P|G)$, where $|P|$ is the population size and $G$ is the number of generations. For VJM-GP, the time complexity increases to $O(|P|G (K+1))$, as each individual is evaluated on the vicinal data for $K$ rounds of vicinal Jensen gap estimation, significantly increasing the training time. However, in practice, not all individuals require $K$ rounds of estimation. For less promising individuals, fewer estimation rounds may suffice. A discussion on an early stopping strategy to accelerate VJM-GP is provided in \Cref{sec: early stop} of the supplementary material.
        Moreover, since overfitting cannot be effectively addressed by simply increasing the number of generations, the additional training time required for VJM is considered a worthwhile trade-off.
        \subsection{Comparisons with Other Learning Algorithms}
        In this section, we follow the evaluation protocol of state-of-the-art symbolic regression benchmarks~\cite{cava2021contemporary} to compare our proposed method against popular machine learning and symbolic regression methods. The training and test data are split according to the same protocol used throughout this paper. To ensure that baseline algorithms are well-tuned, their hyperparameters are optimized on the training data using the parameter grid provided in \cref{fig: SRBench ML-Hyperparameters} in the supplementary material. Given the limited number of samples, hyperparameters are tuned using grid search~\cite{haider2023shape} rather than halving grid search~\cite{de2023transformation}.
        The results presented in \cref{fig: SRBench} demonstrate that our proposed method achieves competitive test $R^2$ scores compared to 15 existing symbolic regression and machine learning methods. The existing popular GP-based regression method, Operon~\cite{burlacu2020operon}, performs poorly when the number of samples is limited. In comparison, our proposed method effectively controls overfitting and achieves the best performance. Nevertheless, the proposed objective function is general and could be integrated into existing GP-based regression methods to enhance their performance. Compared to popular gradient boosting decision trees such as XGBoost~\cite{chen2016xgboost}, VJM-GP also demonstrates very good performance. A Wilcoxon signed-rank test with the Benjamini \& Hochberg correction indicates that VJM-GP significantly outperforms XGBoost, as shown in \cref{fig: Pairwise Comparison}. Furthermore, when compared to deep learning methods designed for small tabular datasets, such as ExcelFormer~\cite{chen2024can}, which leverages the mixup technique to boost generalization, VJM-GP achieves notably better results. This highlights the strength of evolutionary feature construction in scenarios with limited data. Regarding model size, VJM-GP is an order of magnitude smaller than XGBoost, highlighting its potentially superior interpretability. In terms of training time, the proposed evolutionary feature construction method is time-consuming, but average training time remains within two hours. In practical applications, the algorithm could be sped up with early stopping if the best fitness does not improve over a consecutive number of iterations. Overall, these results demonstrate that VJM not only significantly improves the generalization performance of evolutionary feature construction methods but also enables GP to achieve top performance compared to popular machine learning and symbolic regression methods in scenarios where the number of samples is limited.
        \begin{figure*}[!tb]
            \centering
            \includegraphics[width=0.75\linewidth]{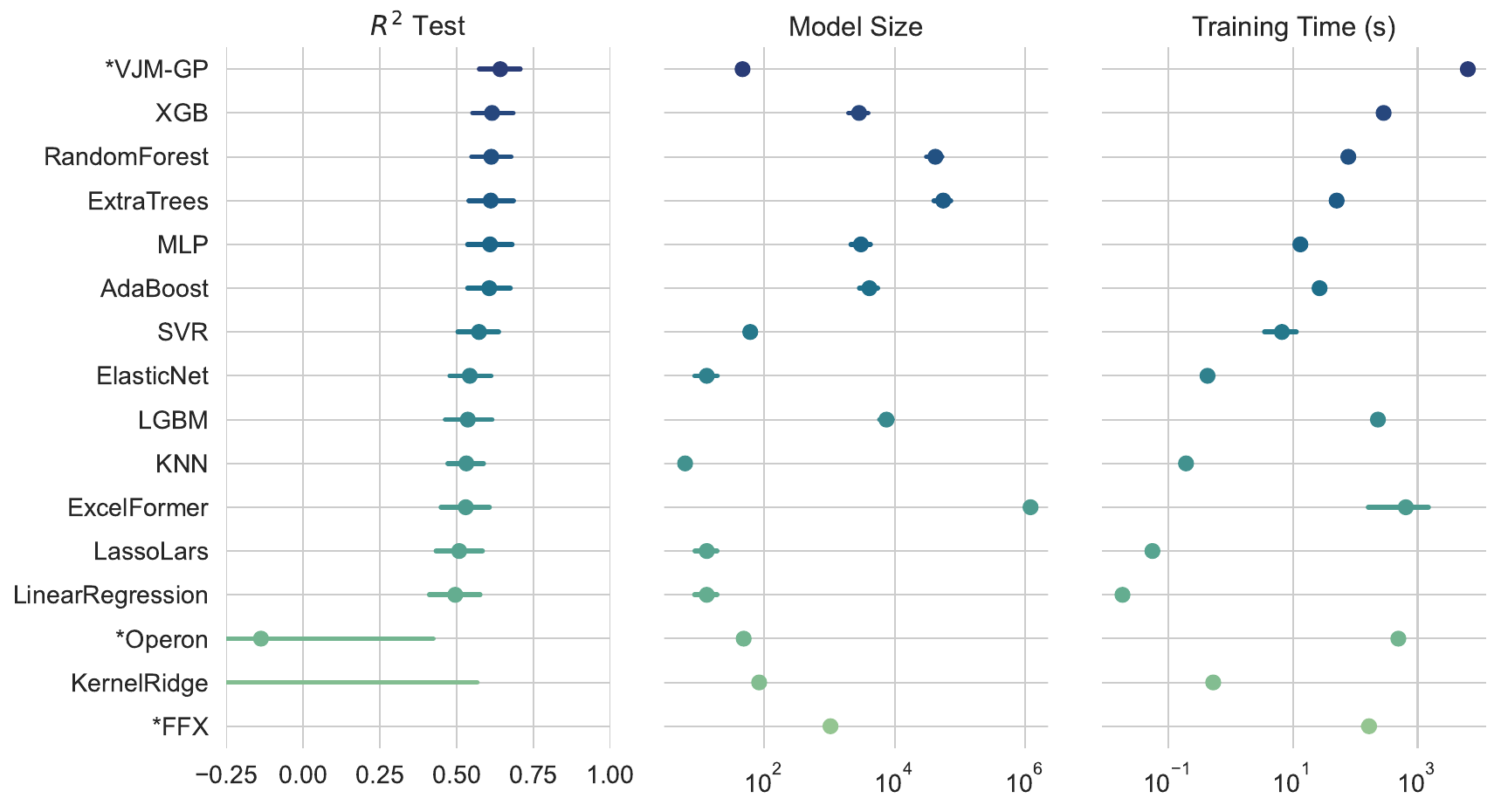}
            \caption{\textbf{Mean $R^2$ scores, model sizes, and training time} of various algorithms across 58 regression problems. (Stars indicate symbolic regression algorithms.)}
            \label{fig: SRBench}
        \end{figure*}
        \begin{figure}[!tb]
            \centering
            \includegraphics[width=\linewidth]{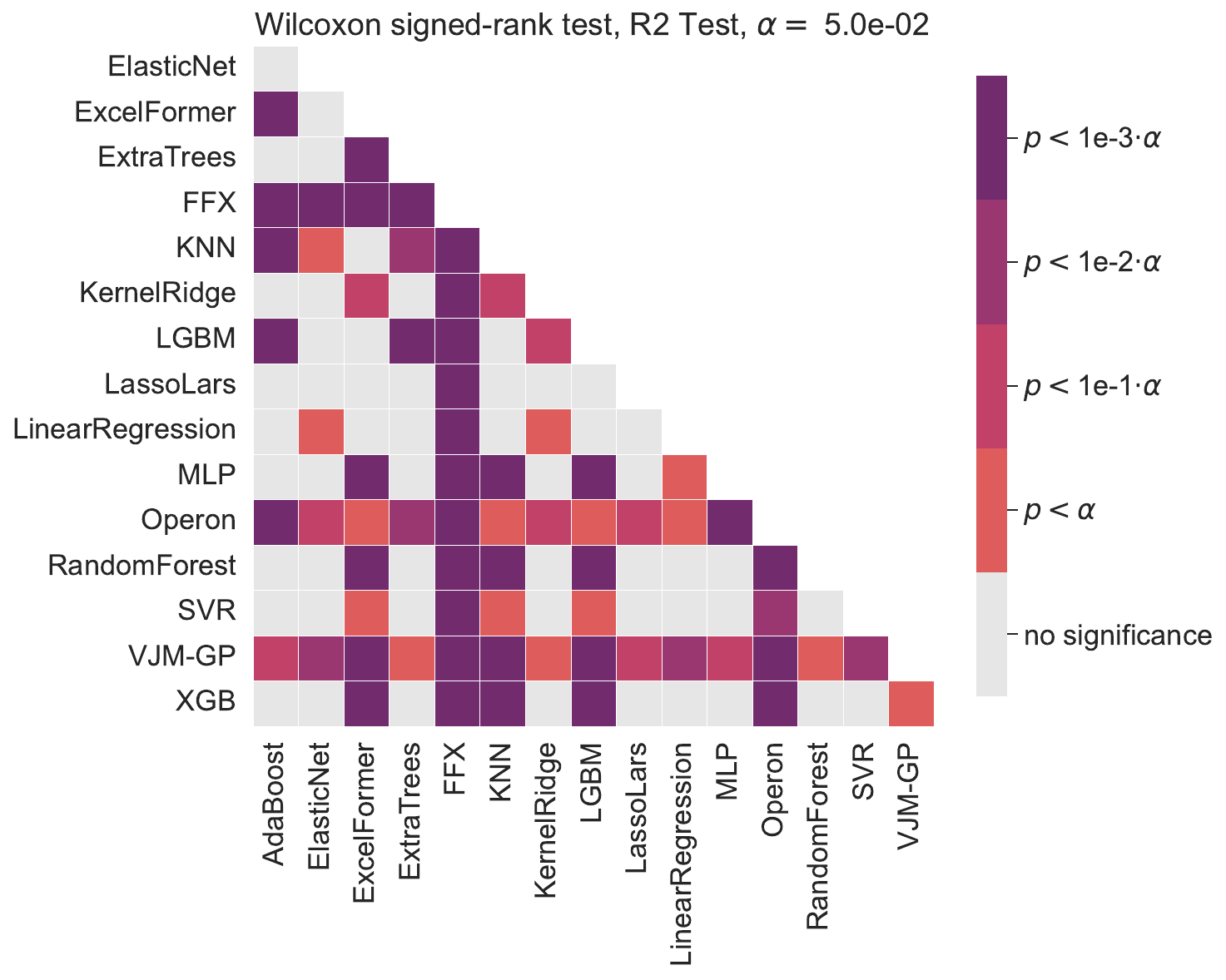}
            \caption{Pairwise statistical comparisons of \textbf{test $R^2$ scores} on 58 regression problems.}
            \label{fig: Pairwise Comparison}
        \end{figure}
        \section{Further Analysis}
        \label{sec: Further Analysis}
        In this section, we first examine the effectiveness of VJM for learning from noisy labels, another important scenario where overfitting frequently occurs. Then, we delve into two key components of the proposed VJM-GP algorithm: the manifold intrusion detection strategy and the noise estimation strategy in the model selection stage, to study the effect of these two improvements made to VJM-GP. Additionally, we provide a visualization of the final constructed features to offer an intuitive comparison between GP trees constructed by VJM-GP and those constructed by standard GP.
        \subsection{Label Noise Learning}
        \def\VJMGPNoiseBetter{31}
        \def\VJMPPNoiseBetter{24}
        \def\VJMPVRMNoiseBetter{19}
        \def\VJMGPNoiseSimilar{3}
        \def\VJMPPNoiseSimilar{9}
        \def\VJMPVRMNoiseSimilar{14}
        \def\VJMGPNoiseWorse{2}
        \def\VJMPPNoiseWorse{3}
        \def\VJMPVRMNoiseWorse{3}
        Learning with limited samples and labels corrupted by noise are two challenging scenarios where evolutionary feature construction tends to overfit. In this section, we demonstrate the effectiveness of the proposed vicinal Jensen gap minimization in controlling overfitting when learning with noisy labels. We focus on 36 datasets with fewer than 2000 instances due to computational constraints. For these datasets, 80\% of samples are randomly chosen as training data, with the remaining 20\% used for testing. Training labels are corrupted with noise drawn from a Gaussian distribution $\mathcal{N}(0,1)$, while test labels remain uncorrupted. In addition to standard GP, we compare VJM with two baseline complexity measures that performed well in \cref{sec: Comparison}: P-VRM and PP. A summary of statistical comparisons is shown in \cref{fig: Noise Learning}. The experimental results indicate that VJM outperforms P-VRM, PP, and standard GP in the context of learning with noisy labels. Specifically, VJM performs significantly better than PP on \VJMPPNoiseBetter{} datasets, similarly on \VJMPPNoiseSimilar{} datasets, and worse on only \VJMPPNoiseWorse{} datasets. VJM also performs significantly better than standard GP on \VJMGPNoiseBetter{} datasets, similarly on \VJMGPNoiseSimilar{} datasets, and worse on \VJMGPNoiseWorse{} datasets. Compared to P-VRM, VJM outperforms it on \VJMPVRMNoiseBetter{} datasets, performs similarly on \VJMPVRMNoiseSimilar{} datasets, and is worse on \VJMPVRMNoiseWorse{} datasets. These results confirm that VJM effectively controls overfitting and enhances generalization performance when learning with noisy labels.
        \begin{figure}[!tb]
            \centering
            \includegraphics[width=\linewidth]{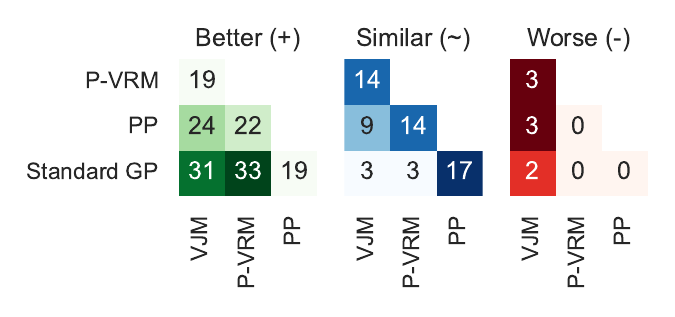}
            \caption{Statistical comparison of \textbf{test $R^2$ scores} when learning with noisy labels for different algorithms.}
            \label{fig: Noise Learning}
        \end{figure}
        \subsection{Manifold Intrusion Detection}
        \def\IntrusionBetter{6}
        \def\IntrusionWorse{2}
        \def\IntrusionTrainingBetter{16}
        \def\IntrusionTrainingWorse{0}
        This section presents ablation studies on the manifold intrusion detection strategy. The experimental results of the statistical comparison on the training set and test set are shown in \subfigref{fig: Intrusion-Training R2} and \subfigref{fig: Intrusion-Test R2}, respectively. These results indicate that the proposed manifold intrusion detection strategy significantly improves the test $R^2$ scores on \IntrusionBetter{} datasets and performs worse on only \IntrusionWorse{} datasets. When examining the training $R^2$ scores, the manifold intrusion detection mechanism improves performance on \IntrusionTrainingBetter{} datasets and does not worsen performance on any dataset. These results confirm that the improvements from manifold intrusion detection are primarily driven by the reduction of regularization effects introduced by incorrectly synthesized samples, which boosts training performance and, in turn, improves generalization.
        \begin{figure}[!tb]
            \begin{subfigure}{0.45\linewidth}
                \centering
                \includegraphics[width=\linewidth]{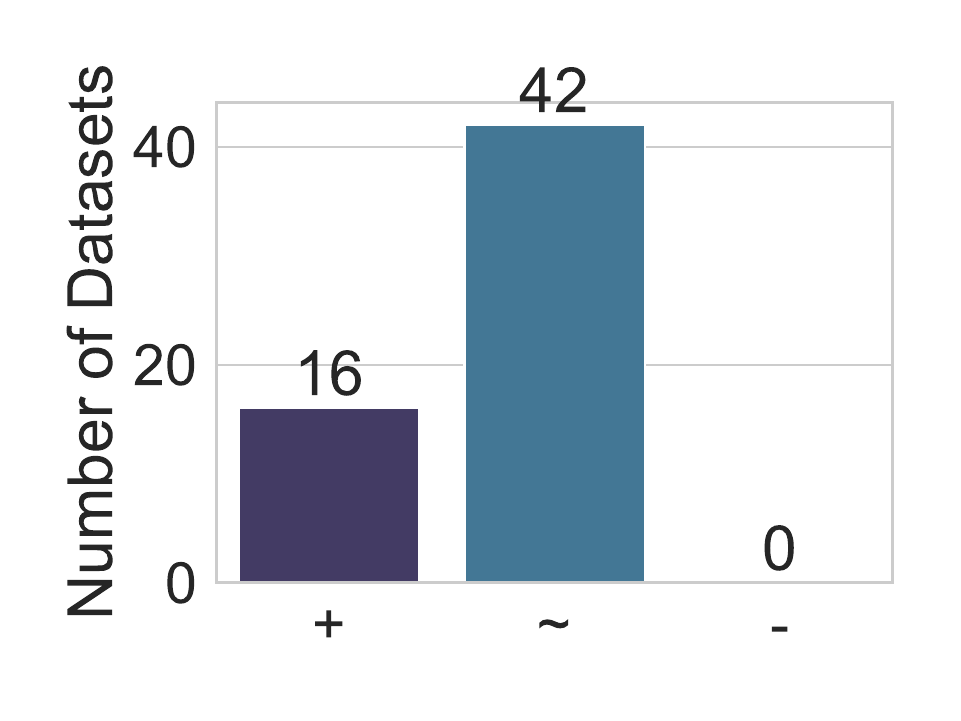}
                \caption{Training $R^2$ scores.}
                \label{fig: Intrusion-Training R2}
            \end{subfigure}
            \begin{subfigure}{0.45\linewidth}
                \centering
                \includegraphics[width=\linewidth]{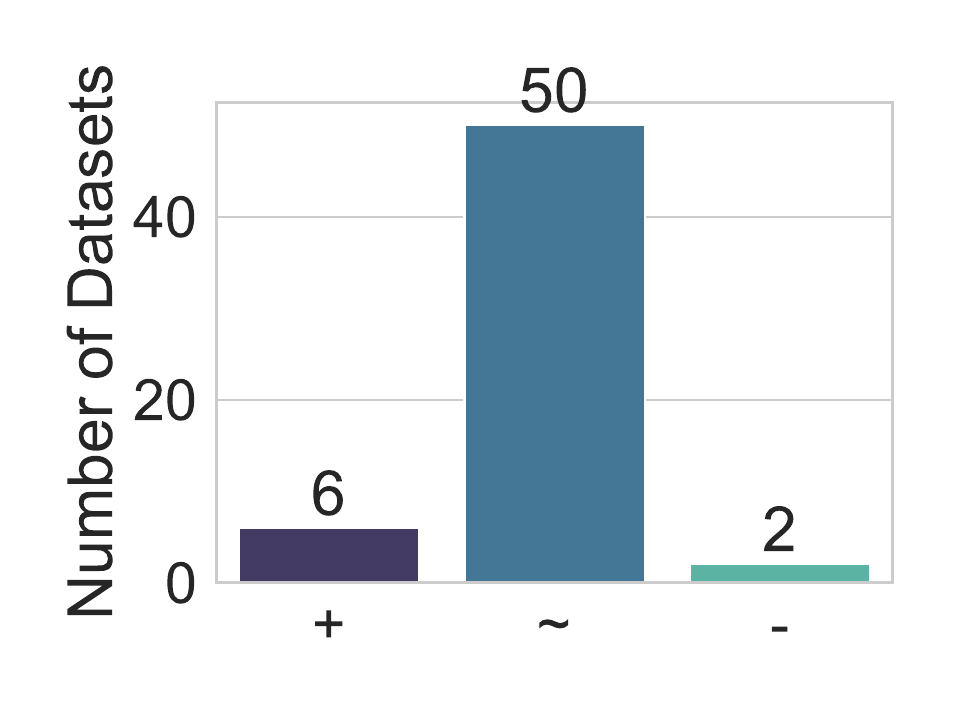}
                \caption{Test $R^2$ scores.}
                \label{fig: Intrusion-Test R2}
            \end{subfigure}
            \caption{Performance comparison with and without manifold intrusion detection. (``+",``$\sim$", and ``-" represent the number of datasets where using manifold intrusion detection performs better, similar to, or worse than not using it.)}
            \label{fig: Intrusion}
        \end{figure}
        \subsection{Noise Estimation Strategy}
        \label{sec: Noise Estimation}
        \def\AdaptiveWeightBetter{12}
        \def\AdaptiveWeightWorse{one}
        In this paper, we propose decomposing vicinal risk into empirical loss and regularization terms. We then design a noise estimation strategy based on the cross-validation loss of extremely randomized trees to determine the coefficient for weighting the vicinal Jensen gap. The experimental results for training $R^2$ scores and test $R^2$ scores with and without the proposed adaptive strategy are shown in \subfigref{fig: Adaption-Training R2} and \subfigref{fig: Adaption-Test R2}, respectively. These results show that adjusting the regularization weight based on dataset characteristics leads to significantly better performance on \AdaptiveWeightBetter{} datasets and only worse performance on \AdaptiveWeightWorse{} dataset. These results validate the effectiveness of our noise estimation strategy and demonstrate the necessity of decomposing vicinal risk into empirical loss and the vicinal Jensen gap. Without such decomposition, there would be no opportunity to set a weight.
        \begin{figure}[!tb]
            \begin{subfigure}{0.45\linewidth}
                \centering
                \includegraphics[width=\linewidth]{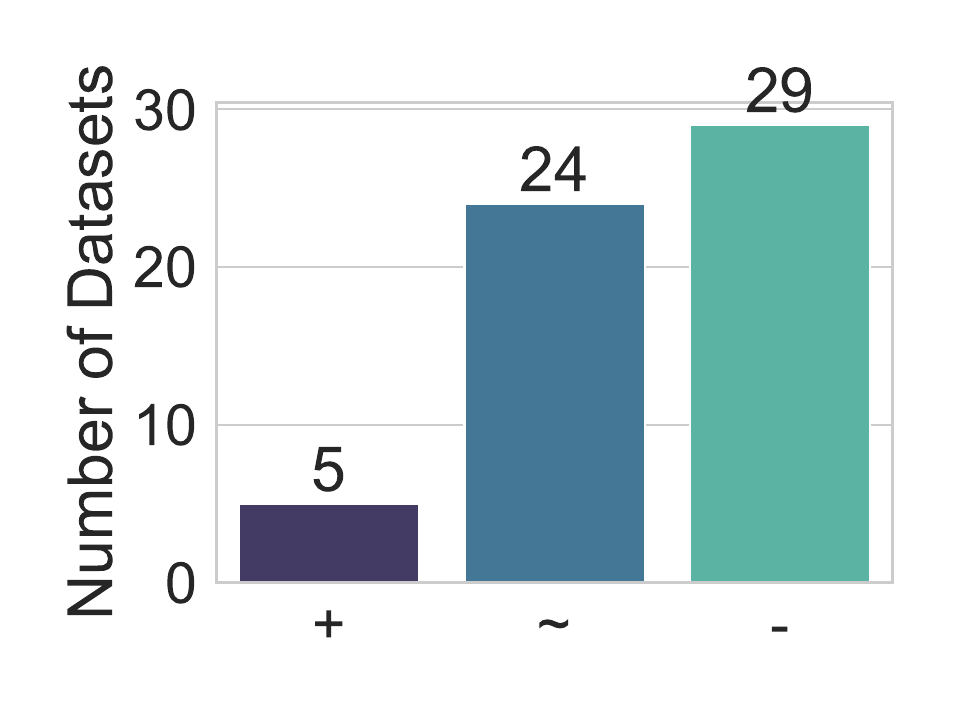}
                \caption{Training $R^2$ scores.}
                \label{fig: Adaption-Training R2}
            \end{subfigure}
            \begin{subfigure}{0.45\linewidth}
                \centering
                \includegraphics[width=\linewidth]{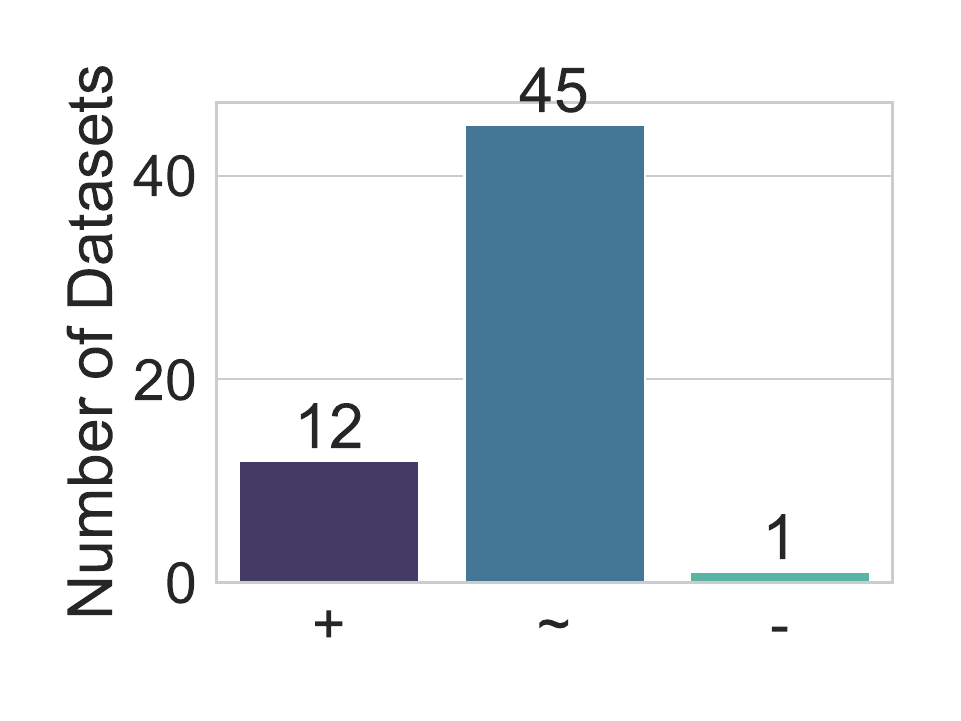}
                \caption{Test $R^2$ scores.}
                \label{fig: Adaption-Test R2}
            \end{subfigure}
            \caption{Performance comparison with and without the noise estimation strategy. (``+",``$\sim$", and ``-" represent the number of datasets where using the noise estimation strategy performs better, similar, or worse, respectively, compared to not using it.)}
            \label{fig: Adaption}
        \end{figure}
        \subsection{Visualization of the Final Models}
        \label{sec: Visualization}
        \def\TRVJM{0.74}
        \def\TRSTD{0.82}
        \def\TEVJM{0.46}
        \def\TESTD{0.36}
        \Cref{tab: VJM Model} and \Cref{tab: GP Model} present a visualization of the final constructed features and their corresponding coefficients in the linear model for the dataset ``OpenML\_547'', representing the constructed features using VJM-GP and standard GP, respectively. This real-world dataset has practical value. Specifically, task aims to predict the concentration of NO2 at each hour in Oslo, Norway. The first original feature, X0, represents the number of cars per hour, which is intuitively positively correlated with the concentration of NO2.
        \par
        The features constructed by standard GP include subtrees such as \texttt{Cos(Min(0.93, X0))}, which lack plausible justification, as there is no reasonable basis for the number of cars per hour to exhibit a sinusoidal correlation with NO2 concentration. In contrast, the features constructed by VJM-GP involve expressions such as \texttt{Add(X0, 0.01)}, which directly capture the expected positive relationship between NO2 concentration and the number of cars, aligning with domain knowledge.
        \par
        In terms of training performance, as shown in \Cref{tab: VJM Model}, VJM-GP achieves a training $R^2$ score of \TRVJM{}, whereas standard GP achieves \TRSTD{}. However, for test performance, VJM-GP achieves a test $R^2$ score of \TEVJM{}, while standard GP achieves \TESTD{}. These results indicate that although standard GP performs better on the training set, it constructs unrealistic features and overfits the training data, leading to poor generalization to unseen data. These findings confirm that VJM-GP can discover features that align with domain knowledge, resulting in superior generalization performance.
        \begin{table}[!tb]
            \centering
            \caption{An example of constructed features and their corresponding coefficients using VJM-GP.}
            \label{tab: VJM Model}
            \resizebox{\columnwidth}{!}{
                \begin{tabular}{lc}
                    \toprule
                    Coefficient & Feature                                                               \\
                    \midrule
                    0.1692      & Max(Neg(X1), Sub(Max(Add(X3, X6), Sub(X1, AQ(Max(X1, X5), X2))), X1)) \\
                    0.3367      & Add(X0, 0.01)                                                         \\
                    -0.1066     & Min(Max(0.45, X5), Add(X5, Min(X2, 0.48)))                            \\
                    0.2449      & Sub(X3, Abs(Min(X4, Sub(Max(Add(X3, X6), X4), 0.01))))                \\
                    -0.0955     & Abs(Abs(Min(Neg(Add(X5, X0)), -0.95)))                                \\
                    -0.2833     & AQ(Sub(Min(X2, X3), Neg(X1)), Max(X4, X2))                            \\
                    0.3367      & X0                                                                    \\
                    0.1146      & Max(X3, X4)                                                           \\
                    -0.0836     & X2                                                                    \\
                    0.0219      & Square(Square(X6))                                                    \\
                    \bottomrule
                \end{tabular}
            }
        \end{table}
        \begin{table}[!tb]
            \centering
            \caption{An example of constructed features and their corresponding coefficients using Standard GP.}
            \label{tab: GP Model}
            \resizebox{\columnwidth}{!}{
                \begin{tabular}{lc}
                    \toprule
                    Coefficient & Feature                                                                          \\
                    \midrule
                    0.5959      & Max(Neg(X1), Sub(Neg(Add(X1, X0)), Sub(Min(X2, Log(Sub(X6, X4))), Add(X3, X6)))) \\
                    0.3167      & Neg(X2)                                                                          \\
                    0.1840      & Log(AQ(X2, X0))                                                                  \\
                    0.1825      & Sin(Abs(X4))                                                                     \\
                    0.1284      & Add(Mul(X1, X4), Neg(X6))                                                        \\
                    0.1437      & Cos(Sub(X6, X4))                                                                 \\
                    -0.1535     & Square(Cos(Min(0.93, X0)))                                                       \\
                    0.9177      & Neg(AQ(Neg(AQ(X0, Sin(Abs(X4)))), Sub(X2, X2)))                                  \\
                    -0.1040     & X5                                                                               \\
                    -0.0605     & Sin(X6)                                                                          \\
                    \bottomrule
                \end{tabular}
            }
        \end{table}
        \section{Conclusions}
        \label{sec: Conclusion}
        In this paper, we decompose vicinal risk into two components: empirical loss and regularization terms, specifically the vicinal Jensen gap and finite difference. This decomposition not only provides deeper insight into the mechanisms behind vicinal risk estimation but also offers an opportunity to design a regularized GP framework for controlling overfitting in evolutionary feature construction algorithms. Additionally, the decomposition enables us to develop a noise estimation strategy to adjust the regularization pressure based on the noise level of datasets. To avoid manifold intrusion from incorrect data augmentation, we propose an intrusion detection strategy. With these improvements, the experimental results show that optimizing training error and the Jensen gap leads to better generalization performance compared with standard GP and training with seven other complexity measures, including directly applying mixup-based VRM to GP without decomposition. Furthermore, our findings suggest that effective overfitting control requires a focus on reducing functional complexity, and merely controlling model size is insufficient. Ablation studies demonstrate that the noise estimation strategy significantly improves generalization performance for datasets with different levels of complexity, and the studies on manifold intrusion detection show the necessity of avoiding manifold intrusion in data synthesis.
        In this work, we primarily explore the effectiveness of the proposed method on GP-based feature construction algorithms. To improve efficiency, the proposed method can be accelerated using caching techniques to avoid re-evaluating performance on repeated GP trees, thereby reducing computational costs. Moreover, with the recent surge in using deep learning to solve symbolic regression tasks, it would be interesting to investigate whether the proposed method can enhance the generalization performance of equation learner-based~\cite{dong2024evolving}, Transformer-based~\cite{kamienny2022end}, or reinforcement learning-based~\cite{mundhenk2021symbolic} symbolic regression methods.
\bibliographystyle{IEEEtranN}
\bibliography{mybibliography}
    \appendices
    \clearpage
        \begin{figure*}
            \begin{center}
                \Huge
                Supplementary Materials for ``Enhancing Generalization in Evolutionary Feature Construction for Symbolic Regression through Vicinal Jensen Gap Minimization''
                \\[10pt]
                \normalsize
                Hengzhe Zhang,
                Qi Chen,~\IEEEmembership{Member,~IEEE,}
                Bing Xue,~\IEEEmembership{Fellow,~IEEE,}
                Wolfgang Banzhaf,~\IEEEmembership{Member,~IEEE,}
                Mengjie Zhang,~\IEEEmembership{Fellow,~IEEE}
            \end{center}
        \end{figure*}
        \section{Proof of Theorem }
        \subsection{Proof of \Cref{the: Perturbation VRM}}
        \begin{proof}
            The squared loss for the vicinal sample, $(y_{\text{vic}} - f(x_{\text{vic}}))^2$, is expanded as follows:
            \begin{equation}
                \begin{split}
                    &(y_{\text{vic}} - f(x_{\text{vic}}))^2 \\
                    &= [y_i - f(x_i + \epsilon)]^2 \\
                    &= \big[y_i - f(x_i) + f(x_i) - f(x_i + \epsilon)\big]^2\\
                    &= (y_i - f(x_i))^2 + 2(y_i - f(x_i))(f(x_i) - f(x_i + \epsilon)) \\
                    &\quad + (f(x_i) - f(x_i + \epsilon))^2\\
                    &\leq  (y_i - f(x_i))^2 + (y_i - f(x_i))^2 \\
                    &\quad +(f(x_i) - f(x_i + \epsilon))^2 + (f(x_i) - f(x_i + \epsilon))^2\\
                    &= 2(y_i - f(x_i))^2 + 2[f(x_i) - f(x_i + \epsilon)]^2.
                \end{split}
            \end{equation}
        \end{proof}
        \subsection{Proof of \Cref{the: VJM}}
        \begin{proof}
            The squared error on the vicinal example is given by $(y_{\text{vic}} - f(x_{\text{vic}}))^2$. We can decompose this error as follows:
            \begin{equation}
                \begin{aligned}
                    (y_{\text{vic}} - f(x_{\text{vic}}))^2 &= \left[ \lambda y_i + (1-\lambda) y_j - f(\lambda x_i + (1-\lambda) x_j) \right]^2 \\
                    &= \left[ \lambda (y_i - f(x_i)) + (1-\lambda) (y_j - f(x_j)) \right. \\
                    &\quad \left. + \left( \lambda f(x_i) + (1-\lambda) f(x_j) - f(x_{\text{vic}}) \right) \right]^2 \\
                    &= \left[ \lambda (y_i - f(x_i)) + (1-\lambda) (y_j - f(x_j)) \right. \\
                    &\quad \left. + \lambda f(x_i) + (1-\lambda) f(x_j) - f(x_{\text{vic}}) \right]^2 \\
                    &= \left[ \lambda (y_i - f(x_i)) + (1-\lambda) (y_j - f(x_j)) \right. \\
                    &\quad \left. + \left( \lambda f(x_i) + (1-\lambda) f(x_j) - f(x_{\text{vic}}) \right) \right]^2 \\
                    &\leq 3\left( \lambda (y_i - f(x_i)) \right)^2 + 3\left( (1-\lambda) (y_j - f(x_j)) \right)^2 \\
                    &\quad + 3\left( \lambda f(x_i) + (1-\lambda) f(x_j) - f(x_{\text{vic}}) \right)^2,
                \end{aligned}
            \end{equation}
            where the inequality follows from the fact that $(a + b + c)^2 \leq 3a^2 + 3b^2 + 3c^2$ for any real numbers $a$, $b$, and $c$.
        \end{proof}
        \section{An Intuitive Example of Vicinal Jensen Gap Regularization}
        \begin{figure}[!tb]
            \centering
            \begin{subfigure}[b]{0.48\linewidth}
                \centering
                \includegraphics[width=\linewidth]{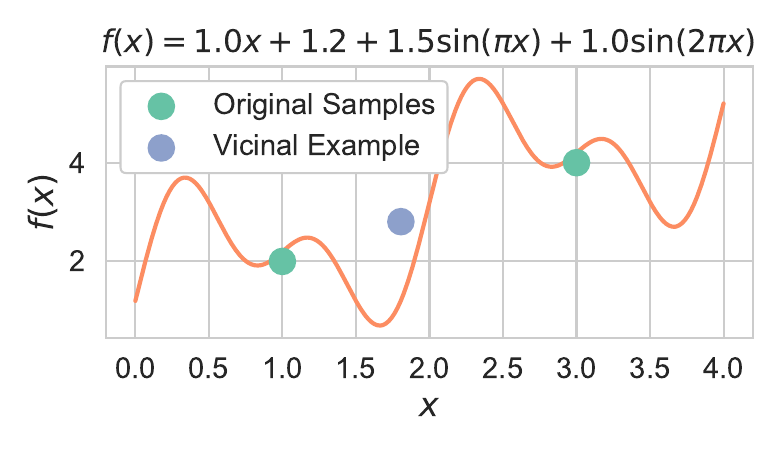}
                \caption{Rugged Symbolic Model}
                \label{fig: samples}
            \end{subfigure}
            \hfill
            \begin{subfigure}[b]{0.48\linewidth}
                \centering
                \includegraphics[width=\linewidth]{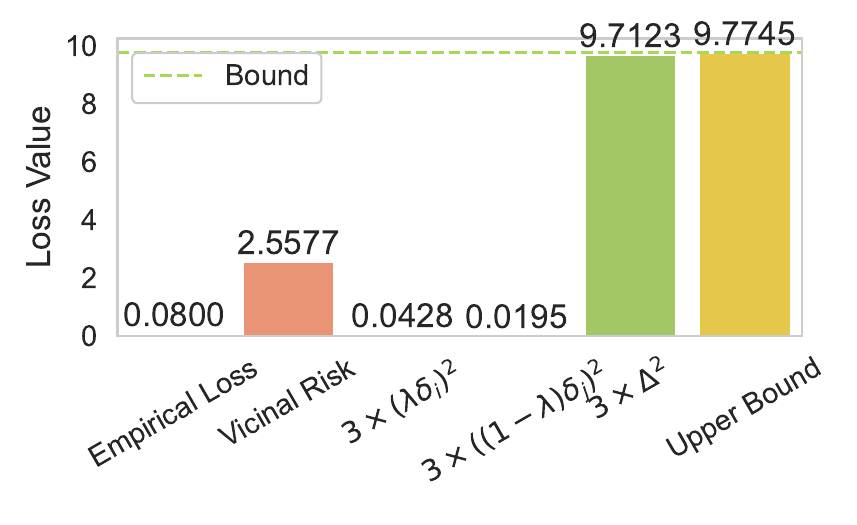}
                \caption{Loss Components}
                \label{fig: loss components}
            \end{subfigure}
            \vfill
            \begin{subfigure}[b]{0.48\linewidth}
                \centering
                \includegraphics[width=\linewidth]{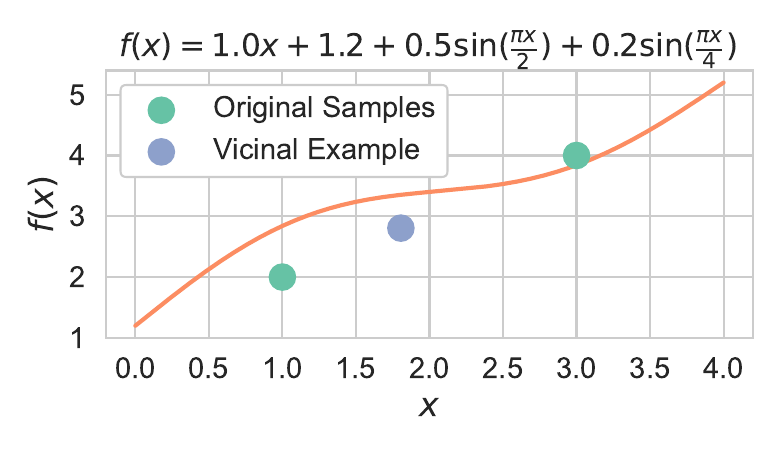}
                \caption{Smooth Symbolic Model}
                \label{fig: smooth samples}
            \end{subfigure}
            \hfill
            \begin{subfigure}[b]{0.48\linewidth}
                \centering
                \includegraphics[width=\linewidth]{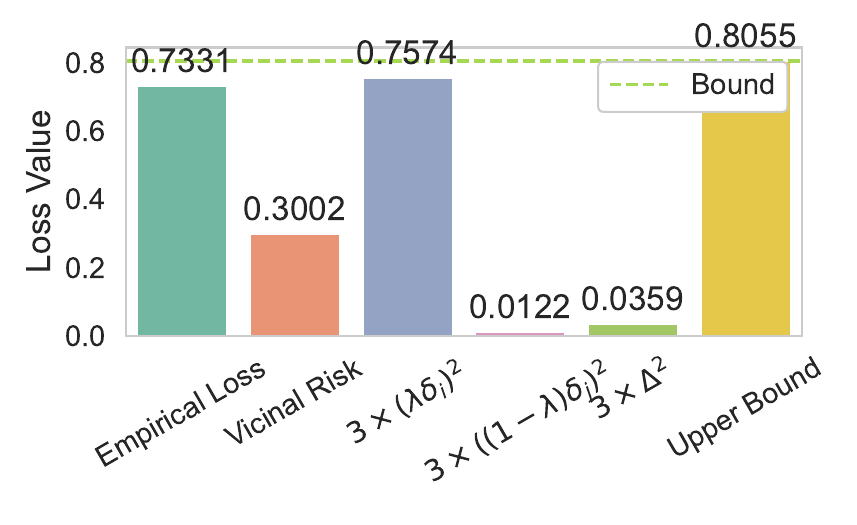}
                \caption{Loss Components}
                \label{fig: smooth loss components}
            \end{subfigure}
            \caption{Comparison of rugged and smooth symbolic models with vicinal Jensen gap decomposition.}
            \vspace{-0.35cm}
        \end{figure}
        To understand how the derived vicinal Jensen gap influences overfitting control, we analyze the vicinal risk decomposition for both a highly complex function and a smooth function, as shown in \cref{fig: loss components} and \cref{fig: smooth loss components}, respectively. For simplicity, we define $\delta_i = y_i - f_{x_i}$, $\delta_j = y_j - f_{x_j}$, and $\Delta = \lambda \cdot f_{x_i} + (1 - \lambda) \cdot f_{x_j} - f_{\text{vic}}$. The training samples, presented in \cref{fig: samples} and \cref{fig: smooth samples}, are consistent across both cases. While the empirical loss, as shown in \cref{fig: loss components} and \cref{fig: smooth loss components}, does not demonstrate the advantage of the smooth function, the vicinal Jensen gap provides clearer insight. The smooth function exhibits a significantly smaller vicinal Jensen gap compared to the complex function, suggesting that minimizing the vicinal Jensen gap can effectively promote smoother functions, thereby reducing overfitting.
        \section{Kernel Analysis}
        \label{sec: Kernel Analysis}
        \def\YSpaceBetter{8}
        \def\YSpaceWorse{5}
        In this paper, the closeness between instances is measured using the kernel defined in \Cref{eq: Kernel}, which simplifies the distance metric $D((x_i, y_i), (x_j, y_j))$ to $D(y_i, y_j)$. For regression problems where the feature space $X$ has low dimensionality, the distance can alternatively be measured using $D((x_i, y_i), (x_j, y_j))$. Specifically, the kernel is defined as:
        \begin{equation}
            D((x_i, y_i), (x_j, y_j)) = \exp\left(-\gamma \lVert (x_i, y_i) - (x_j, y_j) \rVert^2\right),
            \label{eq: XY Kernel}
        \end{equation}
        where $\lVert (x_i, y_i) - (x_j, y_j) \rVert^2$ indicates that the features $X$ are concatenated with the labels $Y$ before computing the distance. This method is referred to as $\text{Mix}_{XY}$, in contrast to $\text{Mix}_{Y}$, which uses only the label $Y$ for distance computation. In theory, distinct assumptions underlie $\text{Mix}_{Y}$ and $\text{Mix}_{XY}$. In $\text{Mix}_{Y}$, two points are sampled based on their $Y$ values, and linear regularization is applied between these points. This allows the algorithm to synthesize instances that may be distant in feature space, with the vicinal Jensen gap applied over a larger region. This is particularly important in cases where instances are often sparsely distributed. For instance, as shown in \cref{fig: XYSpaceLimitation}, consider a one-dimensional case with four samples: $(1, 1)$, $(1.1, 1.1)$, $(5, 1.1)$, and $(5.1, 1.2)$. Here, $\text{Mix}_{Y}$ can synthesize a sample between $(1.1, 1.1)$ and $(5, 1.1)$, achieving regularization, whereas $\text{Mix}_{XY}$ cannot.
        \par
        In contrast, $\text{Mix}_{XY}$ requires closeness in both the $Y$ and $X$ spaces, making manifold intrusion less likely. For example, as shown in \cref{fig: YSpaceLmitation}, in a case with samples $(1, 1)$, $(2, 5)$, and $(3, 1)$, $\text{Mix}_{Y}$ may interpolate between $(1, 1)$ and $(3, 1)$, potentially leading to manifold intrusion. In contrast, $\text{Mix}_{XY}$ is less prone to this issue, as $(1, 1)$ and $(3, 1)$ are far apart when the $X$ space is taken into account. To address the problem of manifold intrusion, a detection mechanism is proposed in \Cref{sec: Manifold Intrusion Detection}. However, this mechanism may not detect all instances of manifold intrusion, as some intrusion points may lie within the confidence bound defined in \Cref{eq: Manifold Intrusion}.
        \par
        The experimental results, presented in \cref{fig: Different Space}, demonstrate that both $\text{Mix}_{Y}$ and $\text{Mix}_{XY}$ are suited for specific types of datasets. Specifically, $\text{Mix}_{Y}$ performs significantly better on \YSpaceBetter{} datasets, but performs worse on \YSpaceWorse{} datasets. This confirms that different distance kernels rely on different assumptions, and selecting the most suitable kernel based on an analysis of the data—such as assessing whether the data are sparsely distributed in feature space—can lead to better results when using VJM-GP.
        \begin{figure}[!tb]
            \centering
            \includegraphics[width=0.5\columnwidth]{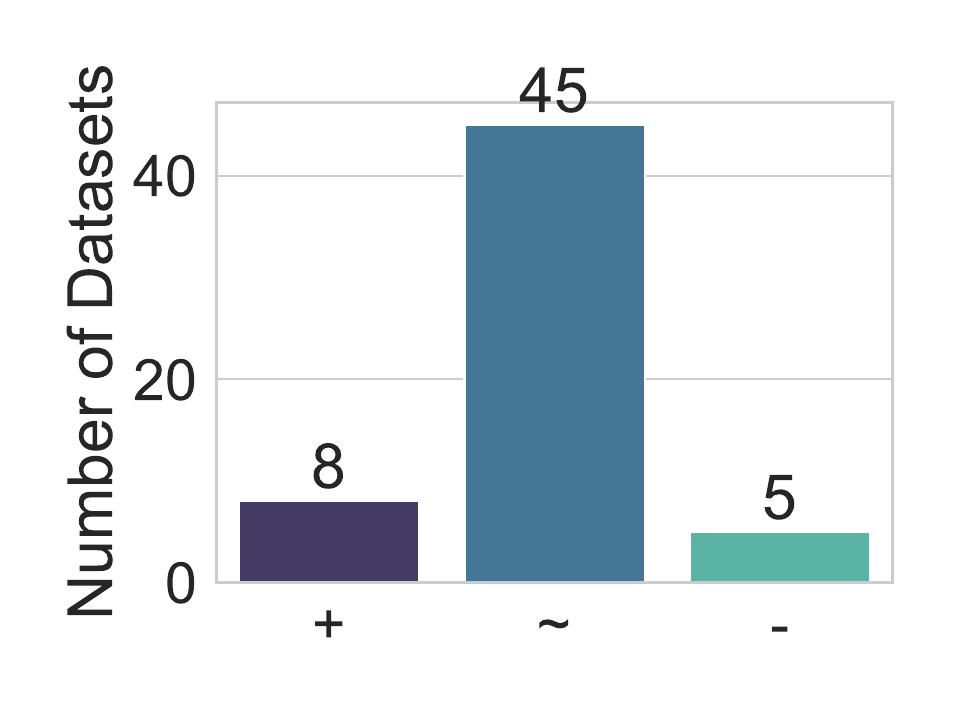}
            \caption{Statistical comparison of test $R^2$ scores when using $\text{Mix}_{Y}$ instead of $\text{Mix}_{XY}$.}
            \label{fig: Different Space}
        \end{figure}
        \begin{figure}[!tb]
            \centering
            \includegraphics[width=0.65\columnwidth]{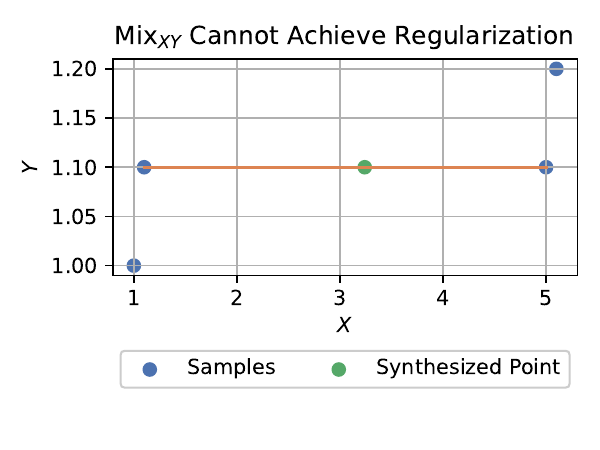}
            \caption{A case where $\text{Mix}_{Y}$ achieves sufficient regularization, but $\text{Mix}_{XY}$ lacks regularization strength. The synthesized point in this example is generated by $\text{Mix}_{Y}$.}
            \label{fig: XYSpaceLimitation}
        \end{figure}
        \begin{figure}[!tb]
            \centering
            \includegraphics[width=0.65\columnwidth]{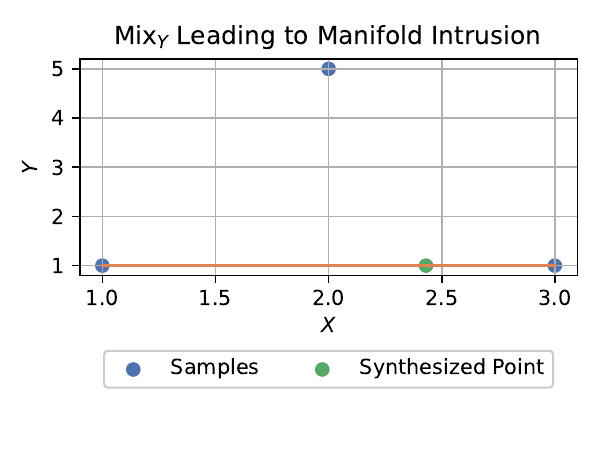}
            \caption{A case where $\text{Mix}_{Y}$ leads to manifold intrusion.}
            \label{fig: YSpaceLmitation}
        \end{figure}
        \section{Comparison with Same Runtime Budget}
        \label{sec: More Generations}
        As shown in \subfigref{fig: CT-Training Time}, the proposed method requires more training time compared to baseline methods such as PP. However, overfitting is a complex issue that cannot be resolved merely by allocating more training time. To investigate this, we compared the proposed method against PP and standard GP under an identical runtime budget. Specifically, all methods were allowed to run for an unlimited number of generations within a runtime limit of 1000 seconds, as shown in \cref{fig: MoreGen}. Under these conditions, the statistical comparison of test $R^2$ scores is presented in \Cref{tab: MoreGen}. The results show that even with an increased number of generations to achieve the same runtime budget, both PP and standard GP remain inferior to VJM-GP. Consequently, the necessity of employing VJM to mitigate overfitting is validated, despite its higher computational cost.
        \begin{figure}[!tb]
            \centering
            \includegraphics[width=0.5\linewidth]{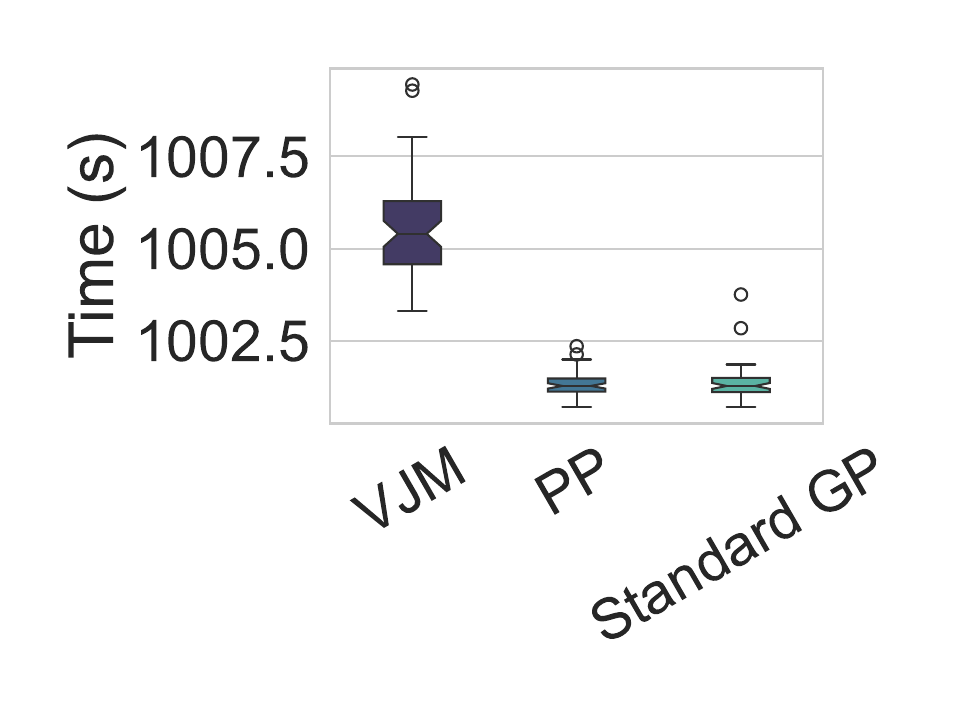}
            \caption{Comparison of training time between the proposed method and baseline methods under the same runtime budget. The minor difference arises because the program waits for the completion of the current generation before terminating after reaching the time limit.}
            \label{fig: MoreGen}
        \end{figure}
        \begin{table}[!tb]
            \centering
            \caption{Statistical comparison of \textbf{test $R^2$ scores} between the proposed method and baseline methods under the same runtime budget.}
            \label{tab: MoreGen}
            \begin{tabular}{ccccc}
                \toprule
                & \textbf{PP}             & \textbf{Standard GP}    \\
                \midrule
                \textbf{VJM} & 34(+)/20($\sim$)/4({-}) & 39(+)/14($\sim$)/5({-}) \\
                \textbf{PP}  & ---                     & 27(+)/25($\sim$)/6({-}) \\
                \bottomrule
            \end{tabular}
        \end{table}
        \section{Speeding Up Vicinal Jensen Gap Estimation with Early Stopping}
        \label{sec: early stop}
        \subsection{Early Stopping Algorithm}
        \label{sec: quick estimation}
        For each synthesized vicinal data point, the feature construction process must be executed, making the vicinal Jensen gap estimation process time-consuming. However, not all individuals require a full evaluation. The following two characteristics of VJM-GP can be leveraged to accelerate this process:
        \begin{itemize}
            \item \textbf{Best Objective-Driven Selection:} The final model is selected based on the individual with the best combination of two objective values, as defined in \Cref{eq: Objective}, over the entire evolutionary process.
            \item \textbf{Monotonicity:} The vicinal Jensen gap, $O_{2}(\Phi)$, monotonically increases with the number of estimation rounds $k$.
        \end{itemize}
        Assume the historically best individual is $\Phi_{Best}$. For $k = 2, \dots, K$, if the current individual $\Phi$ has a combined objective value $O_{1}(\Phi) + \tau \mathcal{V}_{current}$ that is higher than $O_{1}(\Phi_{Best}) + \tau O_{2}(\Phi_{Best})$, the vicinal Jensen gap estimation for $\Phi$ can be terminated early, as the individual cannot outperform the best individual. To ensure that the second objective, $O_{2}(\Phi)$, is not entirely unknown, at least one round of vicinal Jensen gap estimation is conducted for every individual. This is necessary for multi-objective optimization, where all individuals must have defined objective values to guide the evolutionary progress. The pseudocode for this approach is provided in \cref{alg: Vicinal Risk Estimation with Stop}.
        \begin{algorithm}[!tb]
            \caption{Vicinal Jensen Gap/Finite Difference Minimization with Early Stopping}
            \label{alg: Vicinal Risk Estimation with Stop}
            \begin{algorithmic}[1]
                \Require GP Tree $\Phi$, Input $X$, Target Outputs $Y$, Linear Model $F$, Number of Iterations $K$, Current Best Individual $\Phi_{Best}$
                \State Initialize Regularization Term: $\mathcal{V} \gets 0$
                \State $\Phi(X) \gets $ Feature Construction ($X, \Phi$)
                \For{$k = 1, \ldots, K$}
                    \State $X_{vic}, Y_{vic} \gets$ Data Synthesis ($X, Y, k$) \Comment{Cached}
                    \State $\Phi(X_{vic}) \gets $ Feature Construction ($X_{vic}, \Phi$)
                    \State $\hat{Y} \gets$ Prediction($F, \Phi(X_{vic})$)
                    \For{$i = 1, \ldots, N$}
                        \State $\mathcal{V}_i \gets \max(\mathcal{V}_i, (\hat{y}_i^k - y_{vic}^k)^2)$
                    \EndFor
                    \State $\mathcal{V}_{current} = \frac{1}{N} \sum_{i=1}^{N} \mathcal{V}_i$
                    \If{$O_{1}(\Phi) + \tau \mathcal{V}_{current} > O_{1}(\Phi_{Best}) + \tau O_{2}(\Phi_{Best})$}
                        \State \Return $\mathcal{V}_{current}$ \Comment{Terminate Early}
                    \EndIf
                \EndFor
                \Ensure Regularization Term $\mathcal{V} = \frac{1}{N} \sum_{i=1}^{N} \mathcal{V}_i$
            \end{algorithmic}
        \end{algorithm}
        \subsection{Results}
        \label{sec: Speedup}
        \def\SpeedupBetter{4}
        \def\SpeedupWorse{11}
        \def\SpeedupBetterThanPP{29}
        \def\SpeedupWorseThanPP{4}
        \def\VJMT{1058}
        \def\EVJMT{381}
        To examine the efficiency improvement of the proposed early stopping strategy, a comparison of time costs is presented in \cref{fig: Speedup}. The standard VJM-GP takes \VJMT{} seconds on average across 58 datasets, while VJM-GP with the early stopping strategy reduces this to \EVJMT{} seconds on average. This demonstrates that the early stopping strategy significantly reduces computational costs, achieving approximately a threefold acceleration. Thanks to this acceleration, the gap in computational time compared to PP has been largely reduced.
        \par
        Regarding test $R^2$ scores, a comparison is presented in \cref{tab: Speedup}. The proposed method performs significantly better than the version without the speedup strategy on \SpeedupBetter{} datasets and worse on \SpeedupWorse{} datasets, indicating a slight trade-off in accuracy. Compared with PP, the proposed method is still significantly better on \SpeedupBetterThanPP{} datasets and significantly worse on \SpeedupWorseThanPP{} datasets only. These results indicate that the early-stop of VJM-GP is practically useful when training time is constrained. The slight reduction in test $R^2$ scores introduced by the speedup strategy arises because early-stopped individuals may have inaccurately estimated vicinal Jensen gaps. This inaccuracy can limit the evolutionary algorithm's ability to identify models with low vicinal Jensen gaps. Nevertheless, the results demonstrate that the speedup strategy effectively balances computational efficiency and predictive performance. In the future, exploring the use of surrogate models to more accurately estimate the vicinal Jensen gap could further improve both the accuracy and efficiency of the estimation process, potentially leading to better overall results.
        \begin{figure}[!tb]
            \centering
            \includegraphics[width=0.5\linewidth]{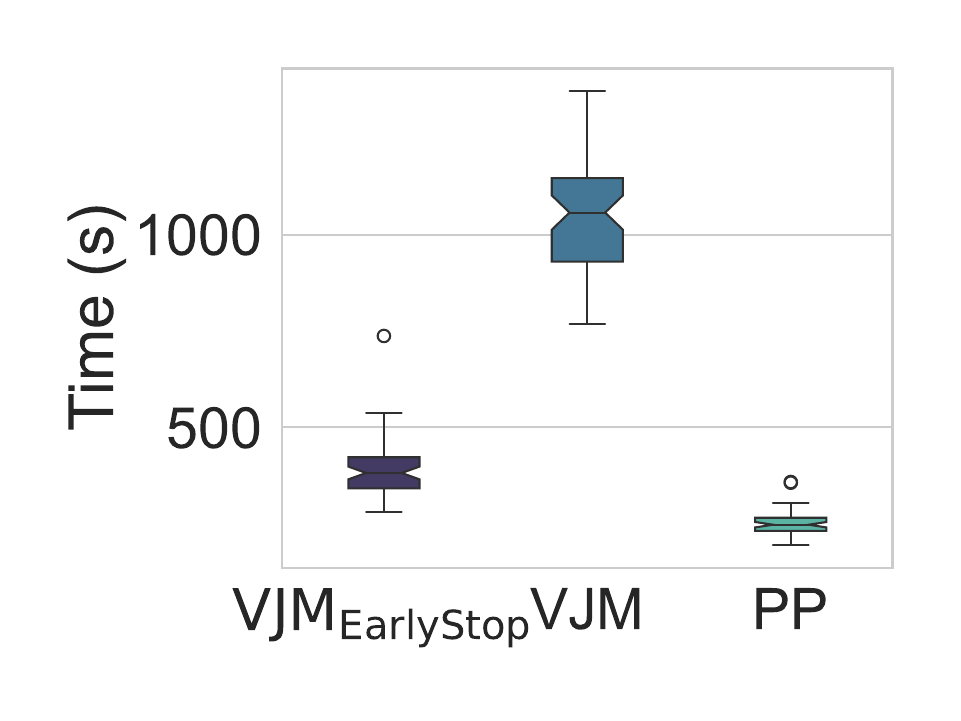}
            \caption{Comparison of training time with and without the early stopping technique.}
            \label{fig: Speedup}
        \end{figure}
        \begin{table}[!tb]
            \centering
            \caption{Statistical comparison of \textbf{test $R^2$ scores} with and without the early stopping technique.}
            \label{tab: Speedup}
            \begin{tabular}{ccccc}
                \toprule
                & \textbf{VJM}            & \textbf{PP}             \\
                \midrule
                \textbf{$\text{VJM}_{\text{EarlyStop}}$} & 4(+)/43($\sim$)/11({-}) & 29(+)/25($\sim$)/4({-}) \\
                \textbf{VJM}                             & ---                     & 29(+)/27($\sim$)/2({-}) \\
                \bottomrule
            \end{tabular}
        \end{table}
        \section{Selection Operators}
        \label{sec: Selection Operators}
        \def\LexicasePTBetter{13}
        \def\LexicasePTWorse{5}
        \def\LexicaseTDCDBetter{10}
        \def\LexicaseTDCDWorse{2}
        In this section, we compare three different selection operators: dominance-based tournament selection in NSGA-II, Pareto tournament selection, and lexicase selection.
        \begin{itemize}
            \item \textbf{Dominance-based Tournament Selection (DTS)~\cite{deb2002fast}:} This method uses the dominance relationship to select individuals. Specifically, two individuals are randomly selected from the population, and the one that dominates the other is chosen. If neither dominates, one is selected randomly.
            \item \textbf{Pareto Tournament Selection (PTS)~\cite{kotanchek2007pursuing}:} The key idea of Pareto tournament selection is to select a subset of individuals from the current population, with the subset size set to 10\% of the population in this paper. These individuals are then sorted based on two objectives, and the entire first Pareto front is selected. Since the crossover operation requires pairs of individuals, the total number of selected individuals is rounded down to the nearest even number to ensure divisibility by two.
            \item \textbf{Lexicase Selection:} Lexicase selection is used as the baseline operator in this comparison. As described in \Cref{sec: Algorithm Framework}, this paper employs automatic $\epsilon$-lexicase selection, a variant tailored for regression tasks.
        \end{itemize}
        For dominance-based and Pareto tournament selection, the leave-one-out cross-validation loss and the vicinal Jensen gap are used as the two objectives, consistent with the objective values employed by the environmental selection operator. The statistical comparison of test $R^2$ scores is summarized in \Cref{tab: Selection}. The results indicate that lexicase selection outperforms dominance-based tournament selection on \LexicasePTBetter{} datasets while performing worse on \LexicasePTWorse{} datasets. Similarly, lexicase selection outperforms Pareto tournament selection on \LexicaseTDCDBetter{} datasets and underperforms on \LexicaseTDCDWorse{} datasets. These observations suggest that performance differences between the selection operators are evident but limited to a subset of datasets. Generally, dominance-based and Pareto tournament selection rely on aggregated fitness scores for optimization, which tends to favor generalists. In contrast, lexicase selection fully leverages semantic information to prioritize specialists. By doing so, lexicase selection helps identify building blocks that are useful for specific instances, which the crossover operator can subsequently combine into better solutions, thereby escaping local optima. However, when comparing training time, Pareto tournament selection is significantly faster than lexicase selection, as shown in \cref{fig: Selection}. This lower training time is not only due to the selection process of Pareto tournament being more time-efficient than lexicase selection but also because Pareto tournament selection explicitly leverages low-complexity solutions to generate offspring, thereby reducing the time required for fitness evaluation. It is important to note that lexicase selection and Pareto tournament selection are not mutually exclusive, and further investigations on their comparisons are still needed. In fact, it may be more beneficial to combine both methods through an adaptive operator selection mechanism that dynamically determines the most appropriate selection operator based on evolutionary information~\cite{zhang2023automatically}.
        \begin{figure}[!tb]
            \centering
            \includegraphics[width=0.5\linewidth]{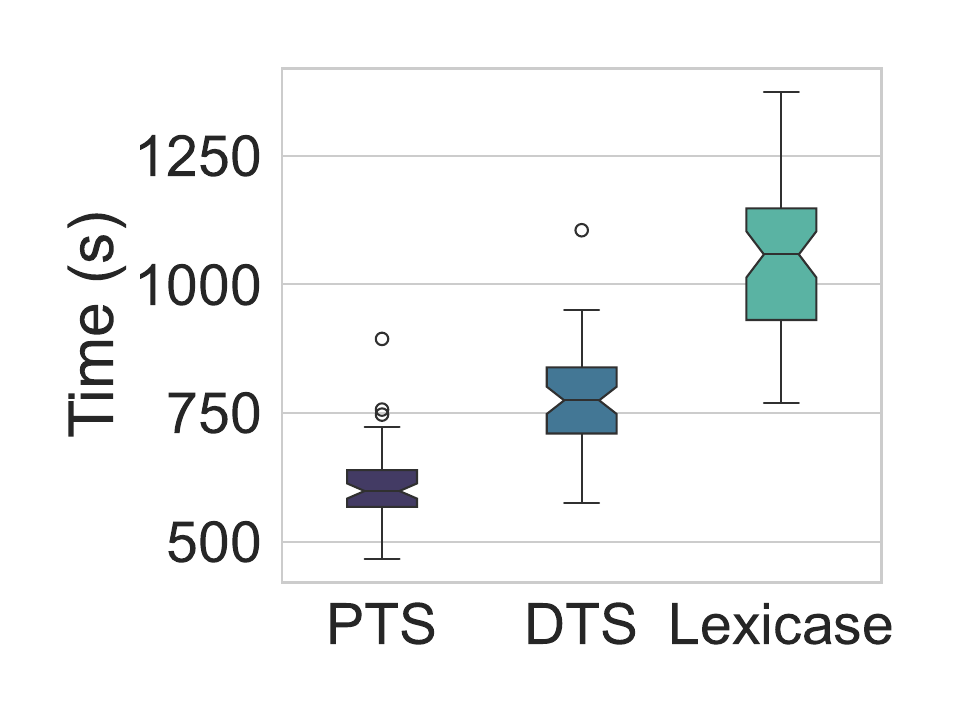}
            \caption{Comparison of using different selection operators.}
            \label{fig: Selection}
        \end{figure}
        \begin{table}[!tb]
            \centering
            \caption{Statistical comparison of \textbf{test $R^2$ scores} using different selection operators.}
            \label{tab: Selection}
            \begin{tabular}{ccccc}
                \toprule
                & \textbf{DTS}           & \textbf{Lexicase}       \\
                \midrule
                \textbf{PTS} & 0(+)/52($\sim$)/6({-}) & 5(+)/40($\sim$)/13({-}) \\
                \textbf{DTS} & ---                    & 2(+)/46($\sim$)/10({-}) \\
                \bottomrule
            \end{tabular}
        \end{table}
        \section{Parameter Analysis}
        \subsection{Alpha in Beta Distribution}
        \label{sec: Alpha}
        \def\BetaDifference{0}
        The alpha value in the Beta distribution determines the ratio of synthetic samples $\lambda$ used in synthesizing vicinal samples. Specifically, a larger alpha value indicates that synthetic samples are generated from a more balanced mix of two original samples, whereas a smaller alpha means that synthetic samples are predominantly derived from a single source sample, as illustrated in \cref{fig: Beta Distribution}. The experimental results presented in \Cref{tab: Alpha} demonstrate that the sensitivity of the $R^2$ scores to alpha is minimal. For example, increasing alpha from 10 to 100 leads to no significant change in the final $R^2$ score. Therefore, in real-world applications, an alpha value of 10 is deemed sufficient.
        \begin{table}[!tb]
            \centering
            \caption{Statistical comparison of \textbf{test $R^2$ scores} using different alpha values in the Beta distribution.}
            \label{tab: Alpha}
            \begin{tabular}{ccccc}
                \toprule
                & \textbf{100}           & \textbf{1}             \\
                \midrule
                \textbf{10}  & 0(+)/58($\sim$)/0({-}) & 0(+)/56($\sim$)/2({-}) \\
                \textbf{100} & ---                    & 1(+)/57($\sim$)/0({-}) \\
                \bottomrule
            \end{tabular}
        \end{table}
        \begin{figure}[!tb]
            \centering
            \includegraphics[width=0.65\linewidth]{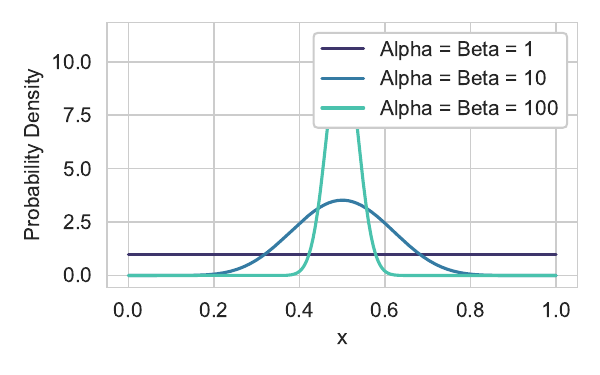}
            \caption{Probability density functions of the Beta distribution for different values of $\alpha = \beta$.}
            \label{fig: Beta Distribution}
        \end{figure}
        \subsection{Spread of Sampling}
        \def\BandwidthDifference{4}
        The spread parameter $\gamma$ determines the range for defining vicinal examples. As shown in \cref{fig: RBF Kernel}, a smaller spread parameter includes more distant examples as neighbors, applying Jensen gap regularization to broader regions. Conversely, a larger spread parameter restricts focus to closer examples, so the regularization primarily smooths over nearby samples. The experimental results for different spread parameters are presented in \Cref{tab: Bandwidth}. The test $R^2$ scores indicate that VJM is generally insensitive to the selection of the spread parameter. For instance, reducing the spread parameter from 0.5 to 0.1 results in a noticeable difference in only \BandwidthDifference{} datasets. This insensitivity suggests that, for real-world applications, a spread parameter of 0.5 is sufficient.
        \begin{table}[!tb]
            \centering
            \caption{Statistical comparison of \textbf{test $R^2$ scores} using different spread parameters.}
            \label{tab: Bandwidth}
            \begin{tabular}{ccccc}
                \toprule
                & \textbf{0.5}           & \textbf{1.0}           \\
                \midrule
                \textbf{0.1} & 3(+)/54($\sim$)/1({-}) & 1(+)/55($\sim$)/2({-}) \\
                \textbf{0.5} & ---                    & 0(+)/57($\sim$)/1({-}) \\
                \bottomrule
            \end{tabular}
        \end{table}
        \begin{figure*}[!tb]
            \centering
            \includegraphics[width=0.75\linewidth]{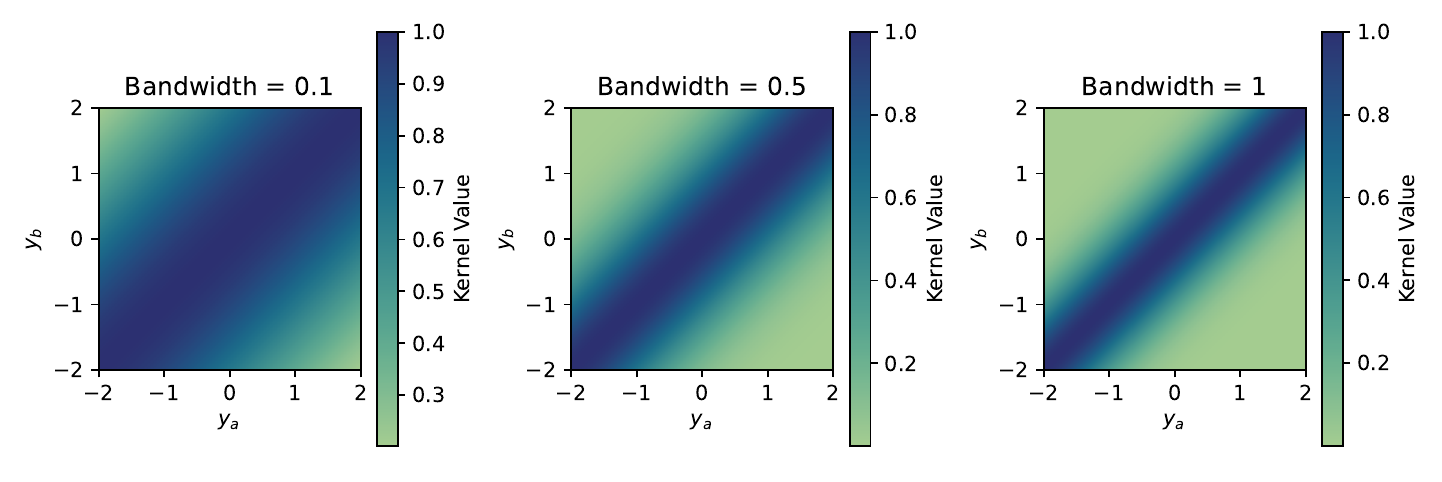}
            \caption{Kernel values for different bandwidths $\gamma$ for pairs of $y_a$ and $y_b$.}
            \label{fig: RBF Kernel}
        \end{figure*}
        \subsection{$\sigma$ in Finite Difference}
        \label{sec: Sigma}
        \def\BetterOnePointFive{9}
        \def\WorseOnePointFive{4}
        \def\BetterOneTwoPointFive{30}
        \def\WorseOneTwoPointFive{5}
        The parameter $\sigma$ in finite difference-based vicinal data synthesis determines the magnitude of noise added to the synthesized data. In this section, we analyze the sensitivity of test $R^2$ scores to different $\sigma$ values. Specifically, $\sigma$ values of 0.05, 0.1, and 0.25 are tested for comparison. The experimental results are presented in \Cref{tab: Sigma}. The results indicate that vicinal data synthesis is sensitive to the choice of $\sigma$. For instance, $\sigma = 0.1$ performs significantly better than $\sigma = 0.05$ on \BetterOnePointFive{} datasets but worse on \WorseOnePointFive{} datasets. Similarly, $\sigma = 0.1$ outperforms $\sigma = 0.25$ on \BetterOneTwoPointFive{} datasets but underperforms on \WorseOneTwoPointFive{} datasets. These results suggest that $\sigma = 0.1$ serves as a good balance point in general. Compared with the vicinal Jensen gap, finite difference regularization appears more sensitive to the hyperparameter $\sigma$. A possible explanation is that, for real-world datasets, local linearity is common. Thus, encouraging the range $[a, b]$ or $[a, b + \epsilon]$ to exhibit linearity does not make a substantial difference for small $\epsilon$. In contrast, finite difference regularization imposes a stronger constraint. Therefore, regularizing $[a,b]$ or $[a,b+\epsilon]$ to be constant can lead to more pronounced differences in outcomes.
        \begin{table}[!t]
            \centering
            \caption{Statistical comparison of \textbf{test $R^2$ scores} using different $\sigma$ values in finite difference-based synthesis.}
            \label{tab: Sigma}
            \begin{tabular}{ccccc}
                \toprule
                & \textbf{0.1}           & \textbf{0.25}           \\
                \midrule
                \textbf{0.05} & 4(+)/45($\sim$)/9({-}) & 25(+)/26($\sim$)/7({-}) \\
                \textbf{0.1}  & ---                    & 30(+)/23($\sim$)/5({-}) \\
                \bottomrule
            \end{tabular}
        \end{table}
        \begin{table*}[htbp]
            \centering
            \caption{The hyperparameter spaces of machine learning methods used in grid search~\cite{cava2021contemporary,chen2024can}.}
            \begin{tabular}{l p{45em}}
                \toprule
                Method           & Hyperparameters                                                                                                                                                                                                                                                                                          \\
                \midrule
                AdaBoost         & \{`learning\_rate': (0.01, 0.1, 1.0, 10.0), `n\_estimators': (10, 100, 1000)\}                                                                                                                                                                                                                           \\
                KernelRidge      & \{`kernel': (`linear', `poly', `rbf', `sigmoid'), `alpha': (0.0001, 0.01, 0.1, 1), `gamma': (0.01, 0.1, 1, 10)\}                                                                                                                                                                                         \\
                LGBM             & \{`n\_estimators': (10, 50, 100, 250, 500, 1000), `learning\_rate': (0.0001, 0.01, 0.05, 0.1, 0.2), `subsample': (0.5, 0.75, 1), `boosting\_type': (`gbdt', `dart', `goss')\}                                                                                                                            \\
                MLP              & \{`activation': (`logistic', `tanh', `relu'), `solver': (`lbfgs', `adam', `sgd'), `learning\_rate': (`constant', `invscaling', `adaptive')\}                                                                                                                                                             \\
                ExcelFormer      & \{`mix\_type': (`feat\_mix', `hidden\_mix', `none')\}                                                                                                                                                                                                                                                    \\
                RandomForest     & \{`n\_estimators': (10, 100, 1000), `min\_weight\_fraction\_leaf': (0.0, 0.25, 0.5), `max\_features': (`sqrt', `log2', None)\}                                                                                                                                                                           \\
                ExtraTrees       & \{`n\_estimators': (10, 100, 1000), `min\_weight\_fraction\_leaf': (0.0, 0.25, 0.5), `max\_features': (`sqrt', `log2', None)\}                                                                                                                                                                           \\
                LinearRegression & N/A                                                                                                                                                                                                                                                                                                      \\
                ElasticNet       & \{`alpha': (1e-06, 1e-04, 0.01, 1), `l1\_ratio': (0.1, 0.5, 0.7, 0.9, 0.95, 0.99, 1)\}                                                                                                                                                                                                                   \\
                LassoLars        & \{`alpha': (1e-04, 0.001, 0.01, 0.1, 1)\}                                                                                                                                                                                                                                                                \\
                SVR              & \{`C': (0.1, 1, 10, 100), `epsilon': (0.01, 0.1, 0.5, 1), `kernel': (`linear', `rbf', `poly')\}                                                                                                                                                                                                          \\
                KNeighbors       & \{`n\_neighbors': (1, 3, 5, 10), `weights': (`uniform', `distance'), `p': (1, 2)\}                                                                                                                                                                                                                       \\
                XGB              & \{`n\_estimators': (10, 50, 100, 250, 500, 1000), `learning\_rate': (0.0001, 0.01, 0.05, 0.1, 0.2), `gamma': (0, 0.1, 0.2, 0.3, 0.4), `subsample': (0.5, 0.75, 1)\}                                                                                                                                      \\
                Operon           & \{`population\_size': (500,), `pool\_size': (500,), `max\_length': (50,), `allowed\_symbols': (`add,mul,aq,constant,variable',), `local\_iterations': (5,), `offspring\_generator': (`basic',), `tournament\_size': (5,), `reinserter': (`keep-best',), `max\_evaluations': (500000,)\}                  \\
                & \{`population\_size': (500,), `pool\_size': (500,), `max\_length': (25,), `allowed\_symbols': (`add,mul,aq,exp,log,sin,tanh,constant,variable',), `local\_iterations': (5,), `offspring\_generator': (`basic',), `tournament\_size': (5,), `reinserter': (`keep-best',), `max\_evaluations': (500000,)\} \\
                & \{`population\_size': (500,), `pool\_size': (500,), `max\_length': (25,), `allowed\_symbols': (`add,mul,aq,constant,variable',), `local\_iterations': (5,), `offspring\_generator': (`basic',), `tournament\_size': (5,), `reinserter': (`keep-best',), `max\_evaluations': (500000,)\}                  \\
                & \{`population\_size': (100,), `pool\_size': (100,), `max\_length': (50,), `allowed\_symbols': (`add,mul,aq,constant,variable',), `local\_iterations': (5,), `offspring\_generator': (`basic',), `tournament\_size': (3,), `reinserter': (`keep-best',), `max\_evaluations': (500000,)\}                  \\
                & \{`population\_size': (100,), `pool\_size': (100,), `max\_length': (25,), `allowed\_symbols': (`add,mul,aq,exp,log,sin,tanh,constant,variable',), `local\_iterations': (5,), `offspring\_generator': (`basic',), `tournament\_size': (3,), `reinserter': (`keep-best',), `max\_evaluations': (500000,)\} \\
                & \{`population\_size': (100,), `pool\_size': (100,), `max\_length': (25,), `allowed\_symbols': (`add,mul,aq,constant,variable',), `local\_iterations': (5,), `offspring\_generator': (`basic',), `tournament\_size': (3,), `reinserter': (`keep-best',), `max\_evaluations': (500000,)\}                  \\
                FFX              & N/A                                                                                                                                                                                                                                                                                                      \\
                \bottomrule
            \end{tabular}
            \label{fig: SRBench ML-Hyperparameters}
        \end{table*}
        \onecolumn
        \setlength{\LTcapwidth}{\textwidth}
        \begin{longtable}[htbp]{lrrrrrrrr}
            \caption{Detailed median test $R^2$ of top-performing algorithms on all 58 PMLB datasets.} \\
            \label{tab: Detailed Results} \\
            \toprule
            & VJM-GP    & XGB       & RandomForest & ExtraTrees & MLP       & AdaBoost  & SVR       & ElasticNet \\
            \midrule
            1027 & 0.840565  & 0.826994  & 0.824332     & 0.812448   & 0.855435  & 0.788757  & 0.854015  & 0.854696   \\
            1028 & 0.334995  & 0.290876  & 0.274882     & 0.198578   & 0.334498  & 0.303251  & 0.304832  & 0.318656   \\
            1029 & 0.542249  & 0.441439  & 0.395716     & 0.293007   & 0.526766  & 0.442702  & 0.540116  & 0.543636   \\
            1030 & 0.345114  & 0.238939  & 0.247096     & 0.190406   & 0.341778  & 0.239396  & 0.326406  & 0.345797   \\
            1089 & 0.726869  & 0.647972  & 0.709673     & 0.715010   & 0.735213  & 0.624932  & 0.747592  & 0.740863   \\
            1096 & 0.825504  & 0.704897  & 0.707586     & 0.550733   & 0.753582  & 0.694478  & 0.813204  & 0.788902   \\
            1191 & 0.198603  & 0.182081  & 0.201308     & 0.189125   & 0.140607  & 0.179012  & 0.174385  & 0.172814   \\
            1193 & 0.529916  & 0.516908  & 0.536592     & 0.530689   & 0.518055  & 0.527700  & 0.504949  & 0.540347   \\
            1196 & 0.407728  & 0.344633  & 0.370030     & 0.356128   & 0.398170  & 0.357886  & 0.410797  & 0.420361   \\
            1199 & 0.385372  & 0.346492  & 0.363977     & 0.361324   & 0.377553  & 0.345141  & 0.370149  & 0.378311   \\
            1201 & -0.000504 & -0.061526 & 0.018752     & 0.017796   & -0.006524 & -0.045104 & 0.003838  & 0.016853   \\
            1203 & 0.465604  & 0.459101  & 0.481883     & 0.472472   & 0.447231  & 0.504761  & 0.432235  & 0.466732   \\
            1595 & -0.050663 & -0.080785 & -0.010936    & -0.009038  & -0.006687 & -0.061970 & -0.024205 & -0.005064  \\
            192  & 0.492727  & 0.493044  & 0.487049     & 0.370296   & 0.560498  & 0.564445  & 0.434034  & 0.368128   \\
            195  & 0.816963  & 0.830318  & 0.868035     & 0.854877   & 0.747347  & 0.801928  & 0.741180  & 0.743558   \\
            197  & 0.894962  & 0.941402  & 0.935246     & 0.954910   & 0.869832  & 0.944216  & 0.357505  & 0.544948   \\
            201  & 0.767137  & 0.816091  & 0.778009     & 0.844249   & 0.465433  & 0.737237  & 0.676101  & 0.190848   \\
            207  & 0.751995  & 0.829024  & 0.852168     & 0.861556   & 0.759248  & 0.815315  & 0.729024  & 0.748505   \\
            210  & 0.809208  & 0.768884  & 0.796473     & 0.793094   & 0.845722  & 0.738728  & 0.886208  & 0.871676   \\
            215  & 0.918329  & 0.905344  & 0.896656     & 0.884696   & 0.882206  & 0.888312  & 0.818823  & 0.674278   \\
            218  & 0.423724  & 0.414668  & 0.435067     & 0.450343   & 0.313891  & 0.404602  & 0.267256  & 0.213272   \\
            225  & 0.340687  & 0.484648  & 0.467970     & 0.535939   & 0.329063  & 0.474318  & 0.346464  & 0.334667   \\
            227  & 0.893457  & 0.928316  & 0.923821     & 0.949588   & 0.857528  & 0.933657  & 0.371028  & 0.530435   \\
            228  & 0.551606  & 0.683940  & 0.638867     & 0.524560   & 0.656204  & 0.618829  & 0.511206  & 0.583576   \\
            229  & 0.856066  & 0.832123  & 0.811026     & 0.798169   & 0.783388  & 0.802321  & 0.742188  & 0.756406   \\
            230  & 0.794644  & 0.819710  & 0.715052     & 0.858531   & 0.737586  & 0.729656  & 0.746859  & 0.743686   \\
            294  & 0.748777  & 0.746761  & 0.765740     & 0.801421   & 0.763300  & 0.752101  & 0.802043  & 0.662896   \\
            344  & 0.996209  & 0.951705  & 0.942087     & 0.982543   & 0.994545  & 0.927896  & 0.938874  & 0.795322   \\
            4544 & 0.691813  & 0.585010  & 0.532841     & 0.599908   & 0.526252  & 0.580982  & 0.552433  & 0.694516   \\
            485  & 0.454381  & 0.508044  & 0.458570     & 0.425216   & 0.424676  & 0.446619  & 0.307905  & 0.391354   \\
            503  & 0.698187  & 0.676443  & 0.663319     & 0.674810   & 0.702977  & 0.638516  & 0.711517  & 0.721085   \\
            505  & 0.996280  & 0.985465  & 0.978513     & 0.990148   & 0.993126  & 0.978002  & 0.995344  & 0.994654   \\
            519  & 0.724352  & 0.686646  & 0.620122     & 0.610716   & 0.734809  & 0.701810  & 0.735497  & 0.738487   \\
            522  & 0.080212  & 0.241097  & 0.219749     & 0.225801   & 0.102416  & 0.193508  & 0.115079  & 0.065844   \\
            523  & 0.935079  & 0.942572  & 0.947106     & 0.931709   & 0.928203  & 0.941711  & 0.920709  & 0.936675   \\
            527  & 0.956401  & 0.681786  & 0.729301     & 0.746947   & 0.784684  & 0.700967  & 0.933900  & 0.751345   \\
            529  & 0.761634  & 0.606867  & 0.534707     & 0.587069   & 0.776545  & 0.514274  & 0.775769  & 0.781650   \\
            537  & 0.597876  & 0.536337  & 0.510514     & 0.524523   & 0.617349  & 0.492070  & 0.572244  & 0.591215   \\
            542  & 0.474008  & 0.301537  & 0.337266     & 0.376004   & 0.249359  & 0.393924  & 0.251514  & 0.330857   \\
            547  & 0.447828  & 0.473047  & 0.478522     & 0.479872   & 0.451461  & 0.479329  & 0.441376  & 0.461079   \\
            556  & 0.795224  & 0.838565  & 0.845691     & 0.793570   & 0.738266  & 0.870867  & 0.768854  & -0.003486  \\
            557  & 0.867248  & 0.857777  & 0.849656     & 0.824656   & 0.794932  & 0.848236  & 0.771608  & -0.001402  \\
            560  & 0.982260  & 0.964594  & 0.967752     & 0.973442   & 0.967967  & 0.950733  & 0.985790  & 0.978617   \\
            561  & 0.959809  & 0.887257  & 0.734116     & 0.822712   & 0.936104  & 0.708708  & 0.796391  & 0.794266   \\
            562  & 0.893457  & 0.928316  & 0.923821     & 0.949588   & 0.857528  & 0.933657  & 0.371028  & 0.530435   \\
            564  & 0.901678  & 0.779753  & 0.655387     & 0.695884   & 0.699820  & 0.680869  & 0.715631  & 0.696366   \\
            573  & 0.894962  & 0.941402  & 0.935246     & 0.954910   & 0.869832  & 0.944216  & 0.357505  & 0.544948   \\
            574  & 0.134109  & 0.297048  & 0.307811     & 0.320709   & 0.055241  & 0.187690  & 0.224135  & 0.056789   \\
            659  & 0.604900  & 0.394315  & 0.543908     & 0.407758   & 0.498848  & 0.516694  & 0.659796  & 0.546316   \\
            663  & 0.997748  & 0.995491  & 0.976191     & 0.987455   & 0.998490  & 0.955134  & 0.998191  & 0.972364   \\
            665  & 0.316847  & 0.155599  & 0.142349     & 0.115811   & 0.292470  & 0.313316  & 0.298176  & 0.291333   \\
            666  & 0.536075  & 0.498181  & 0.473588     & 0.538856   & 0.526476  & 0.500480  & 0.471409  & 0.529492   \\
            678  & 0.276509  & 0.115992  & 0.187108     & 0.166884   & 0.214906  & 0.141751  & 0.253331  & 0.258067   \\
            687  & 0.457557  & 0.425380  & 0.465336     & 0.439297   & 0.387792  & 0.446510  & 0.308491  & 0.383927   \\
            690  & 0.958719  & 0.966728  & 0.963135     & 0.967320   & 0.969688  & 0.955554  & 0.970536  & 0.892017   \\
            695  & 0.841782  & 0.806271  & 0.823695     & 0.841566   & 0.855735  & 0.822629  & 0.862782  & 0.866051   \\
            706  & 0.628841  & 0.542783  & 0.564594     & 0.619290   & 0.607258  & 0.547730  & 0.515102  & 0.582595   \\
            712  & 0.746153  & 0.740555  & 0.717430     & 0.689963   & 0.746347  & 0.721980  & 0.738814  & 0.747513   \\
            \bottomrule
        \end{longtable}
        \clearpage
        \twocolumn
\end{document}